\newtheorem{theorem}{Theorem}[section]
\newtheorem{lemma}[theorem]{Lemma}
\newtheorem{corollary}[theorem]{Corollary}
\newtheorem{remark}[theorem]{Remark}
\begin{document}

\title{Logarithmic Neyman Regret\\ for Adaptive Estimation of the Average Treatment Effect}
\author{
Ojash Neopane\footnote{Machine Learning Department, Carnegie Mellon University, \texttt{oneopane@andrew.cmu.edu}} \and
Aaditya Ramdas\footnote{Department of Statistics and Machine Learning Department, Carnegie Mellon University, \texttt{aramdas@cmu.edu}} \and
Aarti Singh\footnote{Machine Learning Department, Carnegie Mellon University, \texttt{aartisingh@cmu.edu}} \\[2ex]
}
\maketitle


\begin{abstract}
    Estimation of the Average Treatment Effect (ATE) is a core problem in causal inference with strong connections to Off-Policy Evaluation in Reinforcement Learning. 
    This paper considers the problem of adaptively selecting the treatment allocation probability in order to improve estimation of the ATE. 
    The majority of prior work on adaptive ATE estimation focus on asymptotic guarantees, and in turn overlooks important practical considerations such as the difficulty of learning the optimal treatment allocation as well as hyper-parameter selection.
    Existing non-asymptotic methods are limited by poor empirical performance and exponential scaling of the Neyman regret with respect to problem parameters. 
    In order to address these gaps, we propose and analyze the Clipped Second Moment Tracking (\clipSMT) algorithm, a variant of an existing algorithm with strong asymptotic optimality guarantees, and provide finite sample bounds on its Neyman regret.
    Our analysis shows that \clipSMT achieves exponential improvements in Neyman regret on two fronts: improving the dependence on $\numRounds$ from $O(\sqrt{\numRounds})$ to $O(\log \numRounds)$, as well as reducing the exponential dependence on problem parameters to a polynomial dependence.
    Finally, we conclude with simulations which show the marked improvement of \clipSMT over existing approaches.
\end{abstract}

\section{Introduction}\label{sec:introduction}

    Randomized Controlled Trials (RCTs) have long been considered the gold standard of evidence in a variety of disciplines, ranging from medicine \citep{Hollis1999Meant}, policy research \citep{Wing2018Designing}, and economics \citep{Banerjee2016Influence}.
    In their simplest form, RCTs involve a control arm and a treatment arm, and the objective is to determine if the treatment \textit{causally} outperforms the control. 
    This is typically achieved by fixing a treatment assignment probability (hereafter called an \textit{allocation}), assigning experimental units to an arm, and using the resulting outcomes to estimate the Average Treatment Effect(ATE). 

    Despite the ubiquity of RCTs, many practitioners have noted that RCTs would benefit from the use of \textit{adaptive} methods---methods in which practitioners vary some aspect of the experiment through the course of the experiment \citep{Chow2005Statistical, Chow2008Adaptive, US2019Adaptive}.
    Although there are many reasons for desiring adaptivity, our primary focus is to adaptively select the treatment allocation probability in order to obtain the best possible estimate of the ATE.
    More concretely, our goal will be to minimize the MSE of our ATE estimate\footnote{In general, one may wish to minimize the mean squared error of the ATE estimate. Since our work focuses on estimation using the unbiased Horvitz-Thompson estimator, this is equivalent to minimizing the variance.}  
    This is the essence of the problem known as Adaptive Neyman Allocation \citep{Dai2024Clip} and is the primary focus of this work.

    Despite the recent attention given to adaptive approaches, considerable work remains to ensure their success in practice.
    This is because a significant portion of prior work on this topic has focused on developing algorithms with strong asymptotic guarantees.
    In this asymptotic regime, much is known, such as the semiparametric efficiency bound \citep{Bickel1993Efficient, Kallus2019DoubleRL} for non-adaptive approaches, as well as adaptive procedures which asymptotically match the performance of the \emph{best possible} non-adaptive approach \citep{Kato2020EfficientAE}. 
    While these results provide a solid foundation, their asymptotic nature overlooks many nuances crucial for practical application. 
    At a high level, prior asymptotic approaches aim to identify the (unknown) variance-minimizing allocation and demonstrate that their allocation converges to this allocation. 
    However, they do not adequately address the challenges of {\em efficiently learning} this allocation, which is often vital for practical implementation \citep{Wagenmaker2023Instance}.

    In order to address these subtleties, we believe a nonasymptotic analysis is required. 
    Unfortunately, such analyses are currently scarce. 
    The only work we are aware of which provides a nonasymptotic analysis is \citet{Dai2024Clip} who propose the \clipOGD algorithm and show it attains $\bigO(\sqrt{\numRounds}$)  Neyman regret---a new measure of performance which we formally introduce in Section~\ref{sec:preliminaries}. 
    Despite offering a promising starting point, this work has several limitations. 
    As we further expand in Sections~\ref{sec:related_works}~and~\ref{sec:algorithm}, \clipOGD can demonstrate poor empirical performance; this is explained by the exponential scaling of their bounds with respect to various problem parameters which they treat as constants.

    In this paper, we advance the understanding of adaptive estimation procedures for the ATE by providing a finite sample analysis of the Clipped Second Moment Tracking algorithm, a variant of the procedure proposed in \cite{Cook2023SemiparametricEI}, tailored for the Horvitz-Thompson estimator. 
    Our analysis meticulously addresses various problem-specific parameters, demonstrating an exponential improvement with respect to problem parameters.
    We also establish a $\bigO(\log \numRounds)$ bound on Neyman regret, representing another significant improvement over \clipOGD, although \cite{Dai2024Clip} consider the more challenging fixed design setting, while we work in the superpopulation setting defined in Section~\ref{sec:preliminaries}.
    Additionally, our finite sample analysis also highlights some aspects of algorithm design that were previously unaddressed.

\section{Related Works}\label{sec:related_works}

Adaptive experimental design has a long and rich history, with work dating back as far as \citet{Neyman1934OnTT}, who introduced the notion of an optimal allocation in the context of experimental designs. Its significance has been emphasized by prominent researchers like \citet{Robbins1952SomeAO}, who identified adaptive sampling as a crucial statistical problem.
Building on this, \citet{Thompson1933ONTL} proposed a Bayesian adaptive design, which later sparked significant interest in the Multi-Armed Bandit (MAB) problem. Thompson’s Bayesian framework introduced the idea of sequentially updating beliefs about the arms (or treatments) based on observed outcomes, which became a foundational concept in MAB research \citep{Lattimore2020BanditA}.
However, the typical MAB formulations studied in the literature often focus on maximizing cumulative rewards across multiple rounds of exploration and exploitation, whereas our approach focuses on a different objective or setting, diverging from the standard MAB treatment.

Our work builds on a recent line of research focused on developing adaptive algorithms to improve the efficiency of ATE estimation.
This line of work begins with \citet{Hahn2009AdaptiveED}, who propose a two-stage design similar to Explore-then-Commit style algorithms \citep{Garivier2016OnES} found in the MAB literature. 
Building on these ideas, \citet{Kato2020EfficientAE} propose a fully adaptive design which we term Regularized Variance Tracking. 
They demonstrate that this approach asymptotically attains the minimum-variance Semiparametric Efficiency (SPE) bound\footnote{By ‘minimum-variance’ SPE bound, we mean minimizing the asymptotic variance jointly over both the estimators and the treatment allocation. The standard SPE bound, in contrast, only considers the minimal asymptotic variance for a fixed treatment allocation.}.
In a follow-up work, \citet{Cook2023SemiparametricEI} propose the Clipped Variance Tracking algorithm, which provides multiple benefits compared to the Regularized Variance Tracking algorithm. 
Specifically, \citet{Cook2023SemiparametricEI} show that their procedure attains the same minimum variance SPE bound under milder assumptions, can be used with modern uncertainty quantification techniques \citep{WaudbySmith2022AnytimevalidOI}, and has superior empirical performance. 
Finally, in an independent line of work, \citet{Li2024OptimalTA} demonstrates that a generalization of the two-stage approach from \cite{Hahn2009AdaptiveED} can be applied to significantly more general problems including Non-Markov Decision Processes.
They show that their approach also obtains the minimum-variance SPE bound.

The main drawback of these prior works is that they only give an asymptotic characterization of their respective procedures, thus painting an incomplete picture of the problem. 
For instance, the asymptotic analysis presented in \citet{Cook2023SemiparametricEI} fails to address the crucial aspect of tuning their algorithm's parameters, an issue that our finite sample analysis clarifies.
These issues in conjunction with the broad asymptotic optimality of `tracking' style algorithms jointly highlight the pressing need for a nonasymptotic analysis.

When compared to asymptotic analyses, research on characterizing the nonasymptotic performance of adaptive algorithms is sparse; we are aware only of the recent work by \citet{Dai2024Clip}, who introduced the \clipOGD algorithm. \citet{Dai2024Clip} analyzes the problem in the \textit{fixed-design} setting and provides bounds on a scaled proxy to the variance of the resulting ATE estimate, termed the Neyman regret.
While this work represents a significant first step, the performance of \clipOGD scales exponentially with respect to problem parameters such as the optimal allocation. 
Additionally, they consider the more challenging fixed-design setting, resulting in pessimistic bounds when assumptions about the underlying data-generating process, such as in the superpopulation setting, are reasonable. 
Our results address these scaling issues and highlight the significant improvements achievable in the superpopulation setting.

The problem of Off-Policy Evaluation (OPE), a generalization of ATE estimation\footnote{OPE is concerned with estimating the performance of a single policy, whereas ATE estimation can be thought of as estimating the difference in performance between two specific policies. However, the techniques developed in the OPE literature can be modified to estimate the difference in performance between two policies}, has also been thoroughly studied within the Reinforcement Learning literature \citep{Dudik2011Doubly, Li2011Unbiased, Jiang2016DoublyRO}. 
Here, the primary focus has been on offline estimation, and there has been extensive work culminating in precise characterizations of minimax lower bounds as well as matching upper bounds \citep{Li2015Toward, Wang2017OptimalAA, Duan2020Minimax, Ma2021MinimaxOE}. 
These ideas have been further generalized to estimating parameters other than the performance of a policy, such as the estimation of the Cumulative Distribution Function of the rewards \citep{Huang2021OffCB, Huang2022OffRL}. 
There is less work on studying adaptive versions of these methods; the only works we are aware of are \citet{Hanna2017DataEfficientPE} who focus on the problem of off-policy learning, and \citet{Konyushova2021Active} who combine offline OPE methods with an online data acquisition strategy to improve the sample efficiency of policy selection, though these works are primarily empirical.

Tangentially related to our work is a line of research on developing inference procedures using adaptively collected data. 
These works can be split into asymptotic and non-asymptotic approaches. 
On the asymptotic side, one line of work has focused on developing new estimators via re-weighting and demonstrating asymptotic normality \citep{Hadad2021Confidence, Zhang2020Inference, Zhang2021Statistical}. 
Another line of work eschews asymptotic results in favor of nonasymptotic results by utilizing modern martingale techniques to develop nonasymptotic confidence intervals and sequences for adaptively collected data, including quantities like the ATE \citep{Howard2018TimeuniformNN, Waudby2023Estimating, WaudbySmith2022AnytimevalidOI}. 

To summarize, while significant progress has been made in the field of adaptive experimental design and related areas, there remain critical gaps, particularly in understanding the non-asymptotic performance of these methods. 
Our work aims to fill these gaps by providing a finite sample analysis that clarifies some aspects of algorithm design and serves as a starting point for analyzing the non-asymptotic behavior of more complicated algorithms.

\section{Preliminaries}\label{sec:preliminaries}

\paragraph{Problem Setup.}
We consider the following interaction between an algorithm, $\Alg$, and a problem instance, $\problemInstance$. 
At the start of each round $\round$, $\Alg$ selects a treatment allocation, $\policy_\round \in [0, 1]$, based on the history of past observations $\history_{\round - 1} = \left\{ \left( \policy_\timeIndex, \action_\timeIndex, \reward_\timeIndex \right) \right\}_{\timeIndex = 1}^{\round - 1}$. 
Then, the next experimental unit is assigned to either the control ($\action_\round = 0$) or the treatment ($\action_\round = 1)$ arm by sampling $\action_\round \sim \bernoulli(\policy_\round)$.
Following this assignment, an outcome $\reward_\round \in [0, 1]$ is observed, marking the end of the round.

We formalize this interaction protocol as follows.
First, we let $\filtration_{\round} = \sigma(\history_\round)$ denote the filtration generated by past observations.
Then an algorithm $\Alg = (\Alg_\round)$ is a sequence of $\filtration_{\round-1}$ measurable mappings, $\Alg_\round : \history_{\round - 1} \rightarrow \probSimplex(\left\{ 0, 1 \right\})$, where $\probSimplex(\placeholderSet)$ is the set of distributions over $\placeholderSet$. 
A problem instance $\nu: \left\{ 0, 1 \right\} \rightarrow \probSimplex([0, 1])$ is a probability kernel which maps each arm to a distribution over outcomes which we assume to be bounded in the interval $[0, 1]$.
Finally, we let $\reward_\round = \indicator[\action_\round = 0] \reward_\round(0) + \indicator[\action_\round = 1] \reward_\round(1)$, where $\indicator[\cdot]$ denotes the indicator function, and $\reward_\round(\action) \sim \problemInstance(\action)$ are called the \textit{potential outcomes}.
Within the causal inference literature, this framework is typically referred to as the superpopulation we  potential outcomes framework \citep{Neyman1923Application, Rubin1980Randomization, Imbens2015Causal}.   

Implicit in the above interaction protocol are the following assumptions:
\begin{enumerate}
    \item \textit{Bounded Observations:} We assume $\reward_\round \in [0, 1]$ almost surely.
    \item \textit{Stable Unit Treatment Value Assumption:} We assume that $\reward_\round(\action)$ is independent of $\reward_\timeIndex(\action)$.
    \item \textit{Unconfoundedness:} Given the history $\history_{\round - 1}$, we assume the treatment assignment $\action_\round$ is independent of the potential outcomes $\reward_\round(0)$ and $\reward_\round(1)$. Formally, $\reward_\round(\action) \perp \action_\round \mid \history_{\round - 1}$ for $\action \in \{0, 1\}$.
\end{enumerate}
While the second and third assumptions are commonplace in the causal inference literature and necessary for identification, the first assumption warrants a brief discussion.
We make this assumption so that our methods are compatible with a recent line of work aimed at developing variance-adaptive sequential hypothesis tests \citep{Karampatziakis2021Off, WaudbySmith2022AnytimevalidOI, Cook2023SemiparametricEI} where it is currently not known how to construct such tests without assuming bounded observations.
However, our analysis and results can be easily modified to accommodate any class of distributions which guarantee concentration of the uncentered second moment.
As we will discuss, this differs from existing work which assumes upper and lower bounds on the raw second moments.
Indeed, our results don't treat any problem parameters as constant, and we take explicit care to understand scaling with respect to all problem parameters.

\paragraph{Efficient Estimation of the ATE.}
This work is concerned with designing algorithms to efficiently estimate the ATE, which we define as $\ATE = \expectation \left[ \reward(1) - \reward(0) \right]$.
Efficient estimation of the ATE is roughly equivalent to minimizing the variance of an estimate of the ATE.
This is because most standard estimates of the ATE are unbiased so the Mean Square Error (MSE) and widths of confidence intervals scale with the variance.
We consider a variant of the standard Horvitz-Thompson (HT) estimator \citep{Imbens2015Causal}.
For a fixed allocation, $\policy$, the HT estimator is defined as
\begin{equation}
    \HT_{\numRounds}(\policy) \definedAs \frac{1}{\numRounds} \sum_{\round = 1}^{\numRounds} \reward_\round \cdot \left( \frac{\indicator\left[ \action_\round = 1 \right]}{\policy} - \frac{\indicator\left[ \action_\round = 0 \right]}{1 - \policy} \right).
\end{equation}
It is well known that the HT estimator is unbiased, and a simple computation shows that its variance is
\begin{equation}\label{eq:fixed-allocation-variance}
\variance\left[ \HT_\numRounds(\policy) \right] = \frac{1}{\numRounds} \left( \frac{\secondMoment{1}}{\policy} + \frac{\secondMoment{0}}{1 - \policy}  - \ATE^{2}\right),
\end{equation}
where $\secondMoment{\action} \definedAs \expectation \left[ \reward^2(\action) \right]$ is the uncentered second moment of $\problemInstance(\action)$. 
The \textit{Neyman allocation}, $\neymanPolicy$, is the allocation that minimizes the variance of this estimate and is given by
\begin{equation}
    \neymanPolicy = \frac{\sqrtSecondMoment{1}}{\sqrtSecondMoment{0} + \sqrtSecondMoment{1}}.
\end{equation}
Although $\neymanPolicy$ depends on unknown parameters, it can be estimated throughout an experiment, as we will do in this work.

\paragraph{Adaptive Estimation of the ATE.}
One issue with the standard HT estimator is that it requires knowledge of the marginal treatment assignment probability $\prob(\action_\round = 1)$. 
For an adaptive algorithm, we do not know this probability \emph{a priori} since it requires marginalizing over the interaction between $\Alg$ and the unknown problem instance $\problemInstance$.
To remedy this issue, \citet{Kato2020EfficientAE} (and later \citet{Dai2024Clip}) proposed the \textit{adaptive Horvitz-Thompson} (aHT) estimator, defined as
\begin{equation}
    \aHT_\numRounds = \frac{1}{\numRounds} \sum_{\round = 1}^{\numRounds} \reward_\round \left( \frac{\indicator\left[ \action_\round = 1 \right]}{\policy_\round} - \frac{\indicator\left[ \action_\round = 0 \right]}{1 - \policy_\round} \right).
\end{equation}
The aHT is also an unbiased estimator of the ATE; this follows from the fact that $\policy_{\round}$ is $\filtration_{\round - 1}$-measurable in conjunction with the law of total expectation. 
In this work, we will aim to design $\Alg$ to minimize the variance of the aHT estimator.

\paragraph{The Neyman Regret.}
The preceding discussion has highlighted that our objective should be to minimize the variance of the aHT estimator.
A straightforward calculation shows that
\begin{equation}\label{eq:adaptive-ht-variance}
    \variance\left[ \aHT_\numRounds \right] = \expectation \left[ \frac{1}{\numRounds^2}\sum_{\round = 1}^{\numRounds} \left( \neymanCost(\policy_\round) - \ATE^{2} \right) \right],
\end{equation}
where 
$$\neymanCost(\policy) = \frac{\secondMoment{1}}{\policy} + \frac{\secondMoment{0}}{1 - \policy}$$ is the \textit{Neyman loss}.
One option to characterize the performance of $\Alg$ is to bound the variance of the resulting ATE estimate.
However, such a quantity fails to normalize against the inherent difficulty of a problem instance --- if the variance of an estimate using the Neyman allocation is large, then a bound on the variance of an adaptive procedure will paint a misleading picture of the performance of $\Alg$.
As such, in this work, we study the \textit{Neyman regret}, $\neymanRegret_\numRounds$, recently introduced in \cite{Dai2024Clip}, and defined as
\begin{equation}
    \neymanRegret_\numRounds = \sum_{\round = 1}^{\numRounds} \neymanCost(\policy_\round) - \neymanCost(\neymanPolicy).
\end{equation}
Inspecting this quantity, we see that the Neyman regret is a scaled difference between the variance of the aHT estimator using an adaptive design and the variance of the HT estimator using the Neyman allocation.
Therefore, demonstrating $\Alg$ obtains sublinear Neyman regret, we know that the variance of the resulting ATE estimate asymptotically approaches the minimum variance estimate.

\paragraph{Notation.} In what follows, we will let $\actionCount{\round}{\action} = \sum_{\timeIndex = 1}^{\round} \indicator\left[ \action_\timeIndex = \action \right]$ denote the number of times the action $\action$ is selected at the end of round $\round$ and $\empiricalSecondMoment{\round}{\action} = \frac{1}{\actionCount{\round}{\action}} \sum_{\timeIndex = 1}^{\round} \reward^{2}_\timeIndex \indicator\left[ \action_\timeIndex = \action \right]$.
We let $\clip(x, a, b) = \min \left( b, \max \left( x, a \right) \right)$.

\section{Algorithm}\label{sec:algorithm}
    In this section, we introduce the Clipped Second Moment Tracking(\clipSMT) algorithm, state bounds on its Neyman regret, and compare its performance with existing algorithms.
    \textit{To simplify our presentation and discussions, in this section, we will assume $\neymanPolicy \leq \frac{1}{2}$.}
    However, we emphasize our results and analysis can be made to hold for all $\neymanPolicy \in (0, 1)$ by flipping the role of the two policies.
    
    \subsection{The \clipSMT Algorithm}
        We begin by describing the \clipSMT algorithm.
        The idea behind this approach is straightforward: since we do not know the Neyman allocation, we instead choose its empirical counterpart,
        \begin{equation}
            \unclippedPolicy_\round = \frac{\empiricalSqrtSecondMoment{\round - 1}{1}}{\empiricalSqrtSecondMoment{\round - 1}{0} + \empiricalSqrtSecondMoment{\round - 1}{1}}.
        \end{equation}
        While this approach is appealing, it will not work without modification.
        This is because $\unclippedPolicy_\round$ is overly sensitive to random fluctuations during the early rounds of interaction.
        As an extreme example, suppose that we select $\action_1 = 1, \action_2 = 0$ and observe $\reward_1 = 0, \reward_2 = 1$. 
        Then, $\unclippedPolicy_\round = 0$ for all proceeding rounds, leading to infinite Neyman regret.
    
        Therefore, we require some form of regularization to guarantee $\clipSMT$ is robust to randomness early in the experiment.
        To regularize $\unclippedPolicy_\round$, we follow \citet{Cook2023SemiparametricEI} and choose the allocation
        \begin{equation}
            \policy_\round = \clip(\unclippedPolicy_\round, \clippingSequence_\round, 1 - \clippingSequence_\round).
        \end{equation}
        for some clipping sequence $\clippingSequence_\round$.
        Our proceeding finite sample analysis will show that setting $\clippingSequence_\round = \frac{1}{2} \round^{-\frac{1}{3}}$ is the correct choice.
        The full algorithm can be found in Algorithm~\ref{alg:clipped-second-moment-tracking}.

        \begin{algorithm}
            \caption{\clipSMT}
            \label{alg:clipped-second-moment-tracking}
            \begin{algorithmic}
                \STATE \textbf{Input:} Clipping sequence $(\clippingSequence_\round)$
                \FOR{each round $t \in \mathbb{N}$}
                    \STATE Compute $\tilde{\pi}_t = \frac{\empiricalSqrtSecondMoment{\round - 1}{1}}{\empiricalSqrtSecondMoment{\round - 1}{0} + \empiricalSqrtSecondMoment{\round - 1}{1}}$
                    \STATE Set $\policy_\round = \clip(\unclippedPolicy_\round, \clippingSequence_\round, 1 - \clippingSequence_\round)$
                    \STATE Play $\action_\round \sim \bernoulli(\policy_\round)$ and observe $\reward_\round$
            \ENDFOR
            \end{algorithmic}
        \end{algorithm}

    \subsection{Understanding the Finite Sample Behavior of \clipSMT}
        We now present our finite sample analysis of \clipSMT.
        To begin, we will assume that the clipping sequence has polynomial decay so that $\clippingSequence_\round = \frac{1}{2} \round^{-\clippingExponent}$ for some $\clippingExponent \in (0, 1)$.
        We discuss alternative choices for $(\clippingSequence_\round)$ in Appendix~\ref{app:clipping-sequences}.
        
        Our analysis splits the behavior of \clipSMT into two phases --- a clipping phase followed by a concentration phase. 
        In the clipping phase, random fluctuations in $\reward_\round$ will induce large variations in $\tilde \policy_\round$, leading our algorithm to clip $\tilde \policy_\round$.
        The clipping phase ends once we can guarantee that our algorithm will no longer clip the plug-in allocation $\unclippedPolicy_\round$, marking the start of the concentration phase, in which we can show that $\policy_\round$ converges to $\neymanPolicy$ at a $\bigO\left( \round^{-\frac{1}{2}} \right)$ rate.
        
        Our first result characterizes the length of the clipping phase for various choices of $\clippingExponent$, demonstrating how to select $\clippingExponent$ appropriately.

        \begin{restatable}{restatedlemma}{clippingphaselengthbound}
        \label{lem:clipping-phase-length-bound}
        Assume for simplicity that $\neymanPolicy \leq \frac{1}{2}$.
            Suppose we run \clipSMT with $\clippingSequence_\round = \frac{1}{2}\round^{-\alpha}$ for $\alpha \in (0, 1)$.
            Let $\worstCaseClippingExponent = \min \left( \clippingExponent, \frac{1 - \clippingExponent}{2} \right)$ and define 
            \begin{equation}
                \clippingPhaseStoppingTime = \tildeBigO \left( \left[ \frac{1}{\neymanPolicy} + \frac{1}{\sqrtSecondMoment{1}}\left( \frac{1}{\sqrtSecondMoment{0}} + \frac{1}{\sqrtSecondMoment{1}} \right)^\frac{1}{2}\log \left( \frac{1}{\errorProb} \right) \right]^{\frac{1}{\worstCaseClippingExponent}}  \right).
            \end{equation}
            Then with probability at least $1 - \errorProb$, for all $\round \geq \clippingPhaseStoppingTime$, we have that $\unclippedPolicy_\round = \policy_\round$.
        \end{restatable}
        
        Before proceeding we make a few remarks about this result.
        First, we can show that there exists a problem instance such that the above bound on the length of the clipping phase is tight (modulo some polylogarithmic factors). 
        This implies that without additional knowledge on $\problemInstance$, setting $\clippingExponent = \frac{1}{3}$ minimizes the length of the clipping phase 
        Furthermore, the proceeding results will show that in the concentration phase $\policy_\round$ converges to $\neymanPolicy$ at a rate that is independent of $\clippingExponent$, thus suggesting that $\clippingExponent = \frac{1}{3}$ is in some sense the correct choice when we don't have additional information about the uncentered second moments.

        The end of the clipping phase indicates sufficient data collection, mitigating the effects of random fluctuations on $\policy_\round$, thus marking the start of the concentration phase. 
        In this phase we can show that $\policy_\round \in \left[ \policy_{\min}, \policy_{\max} \right]$, so that $\actionCount{\round}{1} = \Omega\left( \policy_{\min} \cdot \round \right)$.
        A simple computation shows that this implies $\policy_\round$ converges to $\neymanPolicy$ at a $O\left(\left(  \policy_{\min} \cdot \round\right)^{-\frac{1}{2}} \right)$ rate. 
        While this leads to the correct dependence on $\round$, the scaling with respect to $\policy_{\min}$ is suboptimal--- we expect the scaling to be with respect to $\neymanPolicy$.
        To see why, note that as the interaction progresses, we expect $\policy_\round$ to eventually converge to $\neymanPolicy$. 
        Consequently, we anticipate that $\actionCount{\round}{1} = \Theta\left( \neymanPolicy \cdot \round \right)$ which further implies that $\policy_\round$ converges to $\neymanPolicy$ at a $O\left(\left(  \neymanPolicy \cdot \round\right)^{-\frac{1}{2}} \right)$ rate.
        To remedy this issue, we develop a `double bounding' technique that uses these initial bounds on $\policy_\round$ and refines them to obtain the correct dependence on $\neymanPolicy$.
        This gives us the following result which shows that $\policy_\round$ converges to $\neymanPolicy$ at the desired rate.
        \begin{restatable}{restatedlemma}{concentrationphasepolicybound}
            \label{lem:concentration-phase-policy-bound}
            Assume for simplicity that $\neymanPolicy \leq \frac{1}{2}$.
            Define
            \begin{equation}
                \clippingPhaseStoppingTime = \tildeBigO \left( \left[ \frac{1}{\neymanPolicy} + \frac{1}{\sqrtSecondMoment{1}}\left( \frac{1}{\sqrtSecondMoment{0}} + \frac{1}{\sqrtSecondMoment{1}} \right)^\frac{1}{2}\log \left( \frac{1}{\errorProb} \right) \right]^{3}  \right).
            \end{equation}
            Then with probability at least $1 - \delta$, for all $\round \geq \clippingPhaseStoppingTime$, \clipSMT guarantees that 
            \begin{equation}
                \left\lvert \neymanPolicy - \policy_{\round + 1} \right\rvert \leq \bigO \left( \sqrt{\frac{\ell(\round, \errorProb)}{\round}}  \right)
            \end{equation}
            where $\loglogTerm(\round, \errorProb) = \bigO\left( \log\log \round + \log \frac{1}{\errorProb} \right)$.
        \end{restatable}
        
        The above result shows that following an additional burn-in period after the clipping phase, $\policy_\round$ will converge to $\neymanPolicy$ at the desired $O\left( \left( \neymanPolicy \cdot \round \right)^{-\frac{1}{2}}\right)$ rate.
        We also make a remark about the $\sqrt{\sqrtSecondMoment{\action}}$ terms that appear in our bound.
        These terms appear because of the concentration inequalities we use for $\sqrtSecondMoment{\action}$.
        Unfortunately, we can show that this term is asymptotically unavoidable (see Remark~\ref{rem:second-moment-clt-argument} in Appendix~\ref{sec:concentration}).
    
    \subsection{Bounding the Neyman Regret}
    Before stating our bound on the Neyman regret of \clipSMT, we first give an alternative expression for the simple Neyman regret and provides insight into our Neyman regret bound.
    
    \begin{restatable}{restatedlemma}{neymanregretdecomposition} 
        \label{lem:neyman-regret-decomposition}
        Fix $\policy_\round \in [0, 1]$ and let $\policyDifference_\round = \policy_\round - \neymanPolicy$.
        Then we have that
        \begin{equation}
            \neymanCost(\policy_\round) - \neymanCost(\neymanPolicy) = \Theta\left( \policyDifference_\round^2 \right)
        \end{equation}
    \end{restatable}
    
    The proof of this result can be found in Appendix~\ref{app:clip-smt-analysis}.
    Surprisingly, this result shows that if $\policy_\round$ converges to $\neymanPolicy$ at a $\bigO\left( \round^{-\frac{1}{2}} \right)$ rate, then the simple Neyman regret will shrink at a $\bigO\left( \round^{-1} \right)$ rate.
    Our next result uses this fact in conjunction with the prior bounds on $\policy_\round$ to bound the Neyman regret. 
    \begin{theorem}\label{thm:clip-smt-neyman-regret}
        Assume for simplicity that $\neymanPolicy \leq \frac{1}{2}$. Suppose we run $\clipSMT$ with $\clippingSequence_\round = \frac{1}{2}\round^{-\frac{1}{3}}$. Then probability at least $1 - \errorProb$, the Neyman Regret is at most 
        \begin{equation}
                    \tildeBigO\left( \neymanPolicy^{- 1} \cdot \log(\numRounds) \right).
        \end{equation}
    \end{theorem}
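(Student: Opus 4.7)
}

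My plan is to decompose the Neyman regret at the burn-in time $\clippingPhaseStoppingTime$ supplied by Lemma~\ref{lem:concentration-phase-policy-bound} and bound each phase separately:
\begin{equation}
\neymanRegret_\numRounds \;=\; \underbrace{\sum_{\round = 1}^{\clippingPhaseStoppingTime} \bigl(\neymanCost(\policy_\round) - \neymanCost(\neymanPolicy)\bigr)}_{\text{burn-in term}} \;+\; \underbrace{\sum_{\round = \clippingPhaseStoppingTime + 1}^{\numRounds} \bigl(\neymanCost(\policy_\round) - \neymanCost(\neymanPolicy)\bigr)}_{\text{concentration term}}.
\end{equation}
Since $\clippingPhaseStoppingTime$ depends only on the problem parameters and $\log(1/\errorProb)$, the burn-in term will be absorbed into the $\tildeBigO(\cdot)$, and the $\log \numRounds$ dependence will come entirely from the concentration term.

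For the burn-in term, the \clipSMT update with $\clippingExponent = 1/3$ enforces $\policy_\round \in [\tfrac{1}{2}\round^{-1/3},\, 1 - \tfrac{1}{2}\round^{-1/3}]$, so $\neymanCost(\policy_\round) \le 2\secondMoment{1}\round^{1/3} + 2\secondMoment{0}\round^{1/3} \le 4\round^{1/3}$. Summing this crude bound up to $\clippingPhaseStoppingTime$ gives $\bigO(\clippingPhaseStoppingTime^{4/3})$, which is polynomial in $1/\neymanPolicy$, $1/\sqrtSecondMoment{\action}$, and $\log(1/\errorProb)$ by Lemma~\ref{lem:concentration-phase-policy-bound}, and hence hidden by $\tildeBigO(\cdot)$ provided that convention hides problem-dependent constants. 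Subtracting $\clippingPhaseStoppingTime \cdot \neymanCost(\neymanPolicy)$ only improves the bound.

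For the concentration term, I combine Lemmas~\ref{lem:neyman-regret-decomposition} and~\ref{lem:concentration-phase-policy-bound} on the event (of probability $\ge 1-\errorProb$) on which the latter holds. Lemma~\ref{lem:neyman-regret-decomposition} gives a problem-dependent constant $C = C(\secondMoment{0}, \secondMoment{1})$ (arising from the second derivative of $\neymanCost$ at $\neymanPolicy$) such that $\neymanCost(\policy_\round) - \neymanCost(\neymanPolicy) \le C(\policy_\round - \neymanPolicy)^2$. Lemma~\ref{lem:concentration-phase-policy-bound} then yields, at the rate suggested after its statement,
\begin{equation}
(\policy_{\round+1} - \neymanPolicy)^2 \;\le\; \bigO\!\left(\frac{\loglogTerm(\round, \errorProb)}{\neymanPolicy \cdot \round}\right),
\end{equation}
so each term in the concentration sum is at most $\bigO(\loglogTerm(\round, \errorProb)/(\neymanPolicy \cdot \round))$. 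Summing the harmonic series from $\clippingPhaseStoppingTime + 1$ to $\numRounds$ produces
\begin{equation}
\sum_{\round = \clippingPhaseStoppingTime + 1}^{\numRounds} \frac{\loglogTerm(\round, \errorProb)}{\neymanPolicy \cdot \round} \;=\; \tildeBigO\!\left(\frac{\log \numRounds}{\neymanPolicy}\right),
\end{equation}
which matches the claimed rate. Adding the two phases finishes the proof.

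The main obstacle is ensuring that the $\neymanPolicy^{-1}$ factor in the final bound is correctly tracked. This requires: (i) checking that the $\bigO$ in Lemma~\ref{lem:concentration-phase-policy-bound} genuinely hides a $\neymanPolicy^{-1/2}$ (as indicated in the preceding discussion via the ``double bounding'' argument, so that $\actionCount{\round}{1} = \Theta(\neymanPolicy \round)$ rather than $\Theta(\policy_{\min} \round)$); and (ii) verifying that the curvature constant $C$ from Lemma~\ref{lem:neyman-regret-decomposition} is independent of $\round$ and depends only on $\secondMoment{0}, \secondMoment{1}$ (and thus indirectly on $\neymanPolicy$), so it is safely absorbed by $\tildeBigO$. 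The remaining work — the crude per-round bound in the burn-in phase and the summation of the harmonic series in the concentration phase — is routine.
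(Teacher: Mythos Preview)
Your proposal is correct and follows the same two-phase decomposition as the paper, invoking Lemmas~\ref{lem:concentration-phase-policy-bound} and~\ref{lem:neyman-regret-decomposition} for the concentration phase in exactly the way the paper does. The one methodological difference is in the burn-in term: the paper appeals to Lemma~\ref{lem:bounds-on-policy}, which shows that on the good event $\policy_\round \in [\policyMin,\policyMax]$ for \emph{all} $\round$, so the instantaneous regret is bounded by a fixed constant $\clippingPhaseSimpleNeymanRegret$ and the burn-in contributes $\clippingPhaseSimpleNeymanRegret\cdot\clippingPhaseStoppingTime$; you instead use the deterministic clipping guarantee $\policy_\round\in[\tfrac12\round^{-1/3},1-\tfrac12\round^{-1/3}]$ directly, giving $\neymanCost(\policy_\round)\le 4\round^{1/3}$ per round and $O(\clippingPhaseStoppingTime^{4/3})$ in total. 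Your route is slightly cruder in the constant but more elementary (it avoids Lemma~\ref{lem:bounds-on-policy} entirely), and since both contributions are polynomial in the problem parameters and independent of $\numRounds$, they are equally absorbed into the $\tildeBigO(\cdot)$.
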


    The proof of this result can be found in Appendix~\ref{app:neyman-regret-bound}. We have just shown that \clipSMT obtains \textit{logarithmic} Neyman regret, providing an exponential improvement from the $O\left( \sqrt{\numRounds} \right)$ Neyman regret obtained by prior works.
    As the proceeding discussion highlights, \clipOGD works in a more general ``design-based'' setup. 
    However, it highlights the significant improvements that can be gained in the superpopluation setting considered in this papers.
    
    \subsection{Comparisons with Prior Work}
        We continue by comparing our results with past works.
        
        \paragraph{Comparison with \citet{Dai2024Clip}.} 
        When comparing our Neyman regret bounds to \clipOGD, we observe exponential improvements in scaling with respect to $\neymanPolicy$ and $\numRounds$.
        
        Starting out with the dependence on $\neymanPolicy$, our bound scales like $\bigO\left( \neymanPolicy^{-1} \right)$ while \clipOGD scales like $\bigO \left( \exp\left( \neymanPolicy^{-1} \right) \right)$.
        We remark that it is not fully clear if the exponential scaling for \clipOGD is a product of the proof technique or is a fundamental drawback of \clipOGD.
        Inspecting the proof in \citet{Dai2024Clip}, this exponential dependence is introduced to tune the learning rate---if bounds on $\neymanPolicy$ are known, \clipOGD can be tuned to scale polynomially in $\neymanPolicy^{-1}$.
        However, even then, not only is the exponent in their polynomial always worse than ours, but it also scales with  $\sqrt{\log \numRounds}$, while \clipSMT does not.
        Finally, we empirically observe that \clipOGD is sensitive to parameter choices.
        The choices suggested by their analysis can often lead to poor performance (as we demonstrate in Section~\ref{sec:experiments}) indicating that the aforementioned exponential dependence is indeed a fundamental drawback.

        Next, we see that our Neyman regret scales like $\bigO(\log \numRounds)$ while \clipOGD scales like $\bigO(\sqrt{\numRounds})$.
        While this is an exponential improvement, we believe this difference is primarily due to the differences in our problem settings---we consider the superpopulation setting where outcomes are stochastic whereas \citet{Dai2024Clip} consider the fixed-design setting where the outcomes are a fixed sequence.
        In the fixed-design setting, the potential outcomes can be chosen adversarially, including with knowledge of the algorithm, thus increasing the problem's difficulty.
        The differences between these settings parallels the differences between stochastic and adversarial MABs where we observe similar gaps in regret bounds.
        In the stochastic bandit setting, the best one can obtain is $\bigO \left( \log \numRounds  \right)$ problem dependent regret\citep{Auer2002Finite}; whereas in the adversarial bandit setting, the best one can hope to do is $\bigO\left( \sqrt{\numRounds} \right)$ minimax regret \citep{Auer1995Gambling}.
        
        \paragraph{Comparison with \cite{Cook2023SemiparametricEI}.}
        As we have mentioned, our algorithm is a variant of the algorithm proposed by \citet{Cook2023SemiparametricEI}, tailored to the aHT estimator.
        The primary difference between our work and \citet{Cook2023SemiparametricEI} is that their focus is asymptotic while ours is nonasymptotic.
        The asymptotic perspective makes design choices such as the appropriate clipping sequence opaque.
        In their concluding remarks \citet{Cook2023SemiparametricEI} state that selection of the clipping sequence is an interesting question for future work -- our finite sample analysis gives a concrete answer to this question.
        As an example of the difficulty in choosing the clipping sequence, \citet{Cook2023SemiparametricEI} uses a clipping sequence with exponential decay.
        Our finite sample analysis indicates that with constant probability, such a clipping sequence will result in an allocation that does not converge to $\neymanPolicy$.
        Finally, we remark that using a clipping sequence with polynomial decay allows us to slightly generalize their asymptotic results by removing the requirement that bounds on $\neymanPolicy$ are known.

\section{Experiments}\label{sec:experiments}

 \begin{figure}[ht!]
    \centering
    \includegraphics[width=\columnwidth]{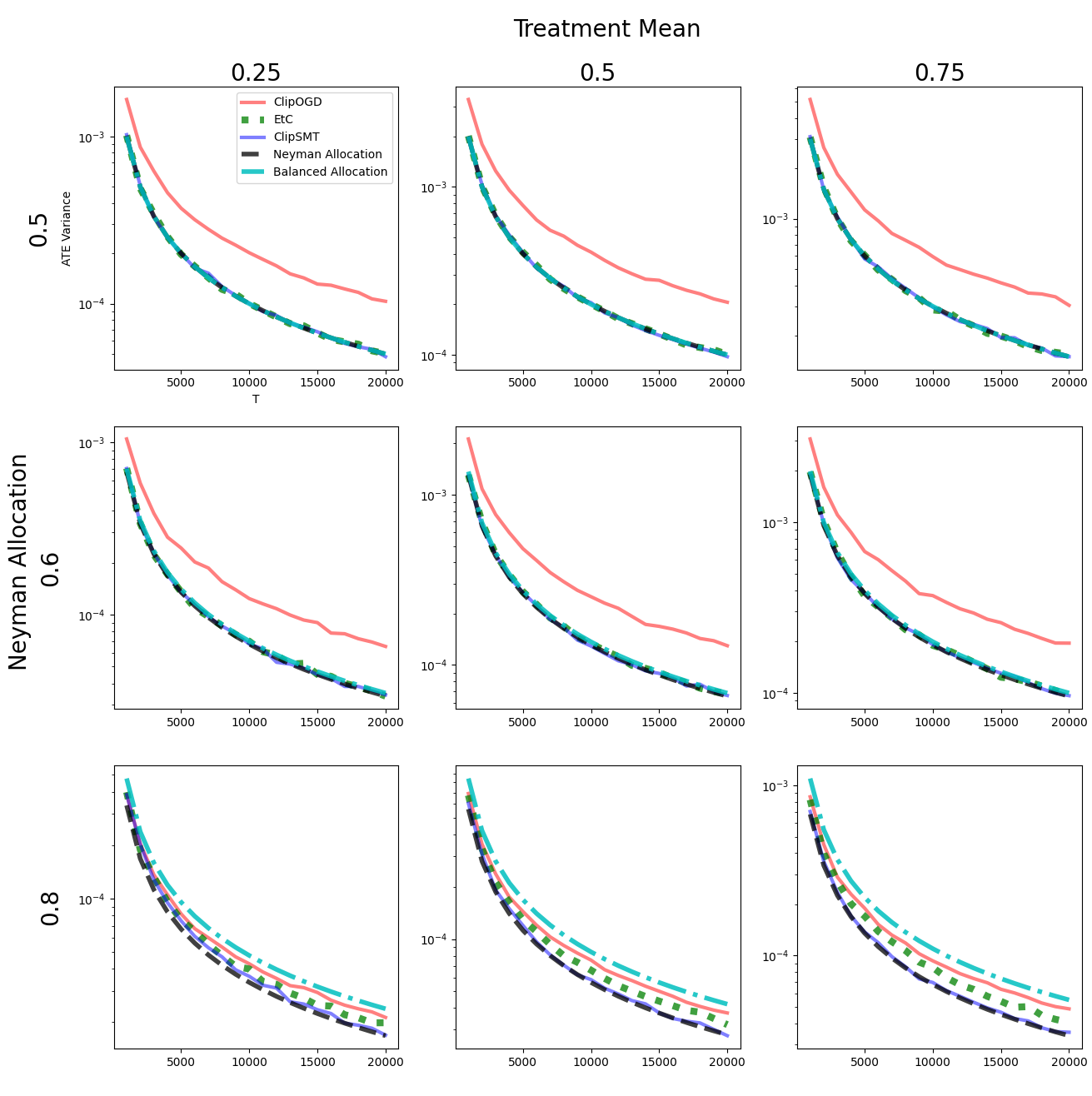}
    \captionsetup{margin=0.5cm}
    \caption{Comparison of the performance of \clipSMT, \clipOGD, Explore-then-Commit (\EtC), Neyman allocation, and a balanced allocation with the treatment and control arms following Bernoulli distributions. Individual subplots plot the variance of each design against the number of samples for a fixed problem instance. Each column keeps the treatment mean fixed, and each row keeps the Neyman allocation fixed. Moving to the right increases the treatment mean and moving down increases the Neyman allocation. Overall the performance of \clipSMT is always competitive with the performance of the infeasible Neyman allocation and outperforms the other adaptive designs. Furthermore, as the Neyman allocation increases, we see that \clipSMT adapts to the increased difficulty while \EtC and the balanced design do not.}
    \label{fig:alg-comparisons}
\end{figure}

In this section, we experimentally validate our algorithm. Our objectives are to compare our algorithm to existing approaches and sensible baselines and to understand how well our theoretical characterization of \clipSMT aligns with its empirical behavior.

We start by comparing our algorithm to existing approaches and some non-adaptive baselines. In these experiments, we compare \clipSMT with \clipOGD, the infeasible Neyman Allocation, a balanced allocation with $\policy = \frac{1}{2}$, and a two-stage design we call Explore-then-Commit (\EtC). For \EtC, we select each treatment arm with equal probability for $\numRounds^{\frac{1}{3}}$ rounds, after which we compute the empirical Neyman allocation and use this allocation for the remaining rounds.

We evaluate each approach on nine problem instances, running them for $\numRounds$ rounds, where $\numRounds$ varies from 1000 to 20000 in increments of 1000. For each fixed value of $\numRounds$, we run \clipSMT, \clipOGD, and the two-stage design 5000 times to approximate the variance of the resulting ATE estimate. For the Neyman and balanced allocations, we compute their variance using equation~\eqref{eq:fixed-allocation-variance}. Our results show that \clipSMT outperforms \clipOGD and \EtC, and adapts well to difficult problem instances (i.e., when the Neyman allocation deviates from $\frac{1}{2}$). The results of the experiments are displayed in Figure~\ref{fig:alg-comparisons}.

Next, we validate whether the length of the clipping phase predicted by our theory aligns with the empirical behavior of \clipSMT.
To do this, we run \clipSMT using $\clippingSequence_\round = \frac{1}{2} \round^{-\clippingExponent}$ for various values of $\clippingExponent \in (0, \frac{1}{2})$.
For each problem instance and each value of $\clippingExponent$, we compute the end of the clipping phase.
We perform this experiment 5000 times and use these trials to determine the 0.95 quantile of when the clipping phase ends.
Using these values, we compute the ratio of the clipping time predicted by our theory to the empirically computed clipping time.
The results of this experiment are shown in Figure~\ref{fig:clipping-ratio}.

\begin{figure}[ht!]
    \centering
    \includegraphics[width=\columnwidth]{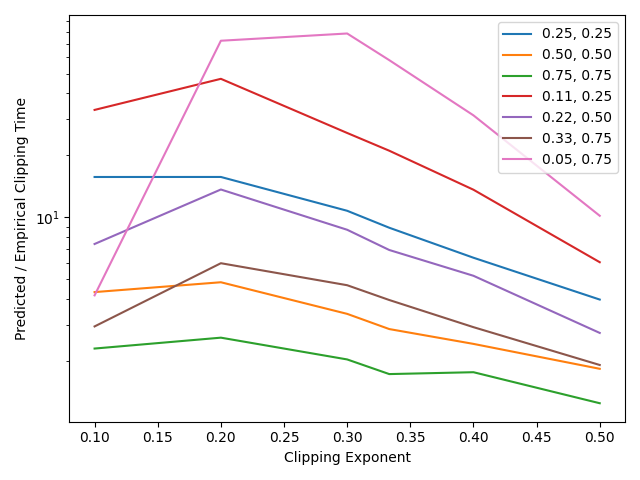}
    \captionsetup{margin=0.5cm}
    \caption{Plot of the ratio between the clipping time predicted by Lemma~\ref{lem:clipping-phase-length-bound} and the empirically computed clipping time against different values for $\clippingExponent$. Each line represents the ratios for a different problem instance. We see that for most problem instances the ratio remains relatively constant for moderate values of $\clippingExponent$.}
    \label{fig:clipping-ratio}
\end{figure}

Upon inspecting these results, we find that for most problem instances, this ratio remains roughly constant when $\clippingExponent \in (0.1, 0.4)$.
However, when the Neyman allocation is large, the predictions made by our theory do not accurately predict the clipping time.
This discrepancy arises from a looseness in our proof.
Specifically, to bound the clipping time, we need to bound a quantity of the form $\min \left\{ \round : \sum_{i} \round^{p_i} \geq c_1 + c_2 \log\log \round \right\}$.
Instead, we bound $\min \left\{ \round : \round^{\max p_i} \geq c_1 + c_2 \log\log \round \right\}$, which provides the correct bound for large values of $c_1$ and $c_2$ but is loose for moderate values of $c_1$ and $c_2$.
Therefore, while our bound is tight in the worst case, it has some looseness for specific problems—resolving this issue remains an interesting technical problem for future work.

\section{Conclusion}\label{sec:conclusion}
In this paper, we performed a finite sample analysis of the \clipSMT algorithm for adaptive estimation of the ATE.
Our analysis clarified several aspects of algorithm design, including how to properly tune the clipping sequence.
Furthermore, we demonstrated that our approach achieves exponential improvements in two distinct areas when compared to the only other method with a finite time analysis. 
Our comprehensive analysis meticulously addressed all problem parameters, providing a clearer and more detailed understanding of the complexity of adaptive ATE estimation.

Several promising directions for future work emerge from our findings. 
One obvious direction is to extend our analysis to the Augmented Inverse Probability Weighted estimator, which has more desirable properties and is more appropriate when contextual information is available.  
Additionally, expanding these results to accommodate larger action spaces and stochastic context-dependent policies warrants further discussion.

\bibliography{ref}
\appendix
\newpage
\appendix
\onecolumn

\section{Analysis of ClipSMT}\label{app:clip-smt-analysis}
    In this section, we will prove our bound on the Neyman regret of \clipSMT.
    \subsection{Preliminaries.} 
        Before we proceed to the analysis, we first introduce some notation and define a `good event' which we will assume to hold throughout the analysis. 
        We define the following events
        \begin{align}
            \goodEventCount(\countErrorProb) &= \bigcap_{\round = 1}^{\infty} \left\{\actionCount{\round}{1} \in \left[\sum_{\timeIndex = 1}^{\round} \policy_\timeIndex - \countConfidenceWidth(\round, \countErrorProb), \sum_{\timeIndex = 1}^{\round} \policy_\timeIndex + \countConfidenceWidth(\round, \countErrorProb)  \right] \right\} \\
            \goodEventSecondMoment(\sqrtSecondMomentErrorProb) &= \bigcap_{\action \in \left\{ 0,1 \right\}} \bigcap_{\round = 1}^{\infty} \left\{\sqrtSecondMoment{\action} \in \left[\empiricalSqrtSecondMoment{\round}{\action} - \sqrtSecondMomentConfidenceWidth(\round, \errorProb), \empiricalSqrtSecondMoment{\round}{\action} + \sqrtSecondMomentConfidenceWidth(\round, \sqrtSecondMomentErrorProb) \right]\right\}.
        \end{align}
        Applying Lemmas~\ref{lem:concentration-bernoulli-counts}~and~\ref{lem:sqrt-second-moment-concentration},  using $\countConfidenceWidth$ and $\sqrtSecondMomentConfidenceWidth$ respectively defined in equations~\eqref{eq:count-confidence-width}~and~\eqref{eq:second-moment-confidence-width} with $\countErrorProb = \frac{\errorProb}{3}, \sqrtSecondMomentErrorProb = \frac{2 \delta}{3}$, we see that the event $\goodEvent = \goodEventCount(\countErrorProb) \bigcap \goodEventSecondMoment(\sqrtSecondMomentErrorProb)$ occurs with probability at least $1 - \errorProb$.
        For the remainder of the section, we will assume that this event hold.
        

    
    \subsection{Bounding the Neyman Regret (Theorem~\ref{thm:clip-smt-neyman-regret})}\label{app:neyman-regret-bound}

        We will bound the cumulative Neyman regret by bounding the simple Neyman regret and then summing over those terms. In order to do so, we will handle the clipping phase and concentration phases separately.

        For the clipping phase, Lemma~\ref{lem:bounds-on-policy} demonstrates that we can guarantee $\policy_\round \in [\policyMin, \policyMax]$ where $\policyMin, \policyMax$ only depend on $\sqrtSecondMoment{\action}$.
        This implies that the instantaneous Neyman regret for each round in the clipping phase can be upper bounded by a constant $\clippingPhaseSimpleNeymanRegret = \max_{\policy \in \left\{ \policyMin, \policyMax \right\}} \neymanCost(\policy) - \neymanCost(\neymanPolicy)$ which only depends on $\sqrtSecondMoment{\action}$.
        Furthermore, Lemma~\ref{lem:clipping-phase-length-bound} shows that the length of the clipping phase is at most $\clippingPhaseStoppingTime$ so that the cumulative Neyman regret from the clipping phase can be upper bounded as $\clippingPhaseSimpleNeymanRegret \cdot \clippingPhaseStoppingTime$ which is independent of $\numRounds$.

        For the concentration phase, we apply Lemma~\ref{lem:concentration-phase-policy-bound} which shows that $\policyDifference_\round \leq \tildeBigO\left( \round^{-\frac{1}{2}} \right)$ so that Lemma~\ref{lem:neyman-regret-decomposition} implies that the instantaneous Neyman regret for each round of the concentration phase is at most
        \begin{equation}
            16\left( \frac{1}{\sqrtSecondMoment{0} + \sqrtSecondMoment{1}} \right)^2 \left( \frac{1}{\sqrt{\sqrtSecondMoment{0} \left( 1 - \neymanPolicy \right)}} + \frac{1}{\sqrt{\sqrtSecondMoment{1} \neymanPolicy}}\right)^2 \frac{\loglogTerm(\round, \errorProb)}{\round}.
        \end{equation}
        Therefore, we can bound the cumulative Neyman regret during the clipping phase as 
        \begin{align}
            \sum_{\round = \clippingPhaseStoppingTime + 1}^{\numRounds} \neymanCost(\policy_\round) - \neymanCost(\neymanPolicy)
                &\leq 16\left( \frac{1}{\sqrtSecondMoment{0} + \sqrtSecondMoment{1}} \right)^2 \left( \frac{1}{\sqrt{\sqrtSecondMoment{0} \left( 1 - \neymanPolicy \right)}} + \frac{1}{\sqrt{\sqrtSecondMoment{1} \neymanPolicy}}\right)^2 \sum_{\round = \clippingPhaseStoppingTime + 1}^{\numRounds}  \frac{\loglogTerm(\round, \errorProb)}{\round} \\
                &\leq 16\left( \frac{1}{\sqrtSecondMoment{0} + \sqrtSecondMoment{1}} \right)^2 \left( \frac{1}{\sqrt{\sqrtSecondMoment{0} \left( 1 - \neymanPolicy \right)}} + \frac{1}{\sqrt{\sqrtSecondMoment{1} \neymanPolicy}}\right)^2 \sum_{\round = 1}^{\numRounds}  \frac{\loglogTerm(\round, \errorProb)}{\round} \\
                &\leq 16\left( \frac{1}{\sqrtSecondMoment{0} + \sqrtSecondMoment{1}} \right)^2 \left( \frac{1}{\sqrt{\sqrtSecondMoment{0} \left( 1 - \neymanPolicy \right)}} + \frac{1}{\sqrt{\sqrtSecondMoment{1} \neymanPolicy}}\right)^2 \loglogTerm(\numRounds, \errorProb) \log(\numRounds).
        \end{align}

        Combining these bounds we see that the Neyman regret can be bounded as
        \begin{equation}
            \clippingPhaseSimpleNeymanRegret \cdot \clippingPhaseStoppingTime + 16\left( \frac{1}{\sqrtSecondMoment{0} + \sqrtSecondMoment{1}} \right)^2 \left( \frac{1}{\sqrt{\sqrtSecondMoment{0} \left( 1 - \neymanPolicy \right)}} + \frac{1}{\sqrt{\sqrtSecondMoment{1} \neymanPolicy}}\right)^2 \loglogTerm(\numRounds, \errorProb) \log(\numRounds) = \tildeBigO(\log(\numRounds)),
         \end{equation}
        which gives the desired result.

    
    \neymanregretdecomposition*
    
    \begin{proof}
        The proof follows from the following series of algebraic manipulations:
        \begin{align}
                \neymanCost(\neymanPolicy + \policyDifference_\round) - \neymanCost(\neymanPolicy) 
                    &= \frac{\secondMoment{1}}{\neymanPolicy + \policyDifference_\round} + \frac{\secondMoment{0}}{(1 - \neymanPolicy - \policyDifference_\round)} - \frac{\secondMoment{1}}{\neymanPolicy} + \frac{\secondMoment{0}}{(1 - \neymanPolicy)}\\
                    &= \policyDifference_\round\left( \frac{\secondMoment{0}}{(1 - \neymanPolicy)(1 - \neymanPolicy - \policyDifference_\round)} - \frac{\secondMoment{1}}{\neymanPolicy(\neymanPolicy + \policyDifference_\round)} \right) \\
                    &\overset{(a)}{=} \policyDifference_\round\left( \frac{\secondMoment{0}}{\left( \frac{\sqrtSecondMoment{0}}{\sqrtSecondMoment{0} + \sqrtSecondMoment{1}} \right)\left( \frac{\sqrtSecondMoment{0}}{\sqrtSecondMoment{0} + \sqrtSecondMoment{1}} - \policyDifference_\round \right)} 
                            - \frac{\secondMoment{1}}{\left( \frac{\sqrtSecondMoment{1}}{\sqrtSecondMoment{0} + \sqrtSecondMoment{1}} \right)\left( \frac{\sqrtSecondMoment{1}}{\sqrtSecondMoment{0} + \sqrtSecondMoment{1}} + \policyDifference_\round \right)} \right) \\
                    &= \policyDifference_\round\left( \frac{\secondMoment{0}\left( \sqrtSecondMoment{0} + \sqrtSecondMoment{1} \right)^2}{\secondMoment{0}- \sqrtSecondMoment{0}\left( \sqrtSecondMoment{0} + \sqrtSecondMoment{1} \right) \policyDifference_\round} 
                        - \frac{\secondMoment{1}\left( \sqrtSecondMoment{0} + \sqrtSecondMoment{1} \right)^2}{\secondMoment{1}- \sqrtSecondMoment{1}\left( \sqrtSecondMoment{0} + \sqrtSecondMoment{1} \right) \policyDifference_\round}  \right) \\
                    &= \policyDifference_\round\left( \left[ \frac{\secondMoment{0}\left( \sqrtSecondMoment{0} + \sqrtSecondMoment{1} \right)^2}{\secondMoment{0}- \sqrtSecondMoment{0}\left( \sqrtSecondMoment{0} + \sqrtSecondMoment{1} \right) \policyDifference_\round} 
                        - \left( \sqrtSecondMoment{0} + \sqrtSecondMoment{1} \right)^{2} \right] \right. \nonumber \\
                    & \qquad \left. + \left[ \left( \sqrtSecondMoment{0} + \sqrtSecondMoment{1} \right)^{2} 
                        - \frac{\secondMoment{1}\left( \sqrtSecondMoment{0} + \sqrtSecondMoment{1} \right)^2}{\secondMoment{1}- \sqrtSecondMoment{1}\left( \sqrtSecondMoment{0} + \sqrtSecondMoment{1} \right) \policyDifference_\round} \right]  \right) \\
                    &= \policyDifference_\round^2\left( \sqrtSecondMoment{0} + \sqrtSecondMoment{1} \right)^3\left( \frac{1}{\sqrtSecondMoment{0} - \left( \sqrtSecondMoment{0} + \sqrtSecondMoment{1} \right) \policyDifference_\round} - \frac{1}{\sqrtSecondMoment{1} - \left( \sqrtSecondMoment{0} + \sqrtSecondMoment{1} \right) \policyDifference_\round}\right),
        \end{align}
        where in $(a)$ we have used the fact that $\neymanPolicy = \frac{\sqrtSecondMoment{1}}{\sqrtSecondMoment{0} + \sqrtSecondMoment{1}}$.    
    \end{proof}

    \subsection{Clipping Phase}
        We now cover various proofs related to the analysis of the clipping phase of our algorithm.

        We begin by proving Lemma~\ref{lem:clipping-phase-length-bound} which we restate for the reader's convenience.

        \clippingphaselengthbound*
        
        \begin{proof}
            To begin, we observe that since the function $x, y \mapsto \frac{x}{x + y}$ is monotonic increasing (resp. decreasing) in $x$ (resp. $y$) we have (on the event $\goodEvent$) that
            \begin{equation}~\label{eq:policy-ci}
                    \unclippedPolicy_{\round + 1} \in  \left[\frac{\sqrtSecondMoment{1} - \sqrtSecondMomentConfidenceWidth(\actionCount{\round}{1}, \sqrtSecondMomentErrorProb)}{\sqrtSecondMoment{0} + \sqrtSecondMomentConfidenceWidth(\actionCount{\round}{0}, \sqrtSecondMomentErrorProb) + \sqrtSecondMoment{1} - \sqrtSecondMomentConfidenceWidth(\actionCount{\round}{1}, \sqrtSecondMomentErrorProb)}, \frac{\sqrtSecondMoment{1} + \sqrtSecondMomentConfidenceWidth(\actionCount{\round}{1}, \sqrtSecondMomentErrorProb)}{\sqrtSecondMoment{0} - \sqrtSecondMomentConfidenceWidth(\actionCount{\round}{0}, \sqrtSecondMomentErrorProb) + \sqrtSecondMoment{1} + \sqrtSecondMomentConfidenceWidth(\actionCount{\round}{1}, \sqrtSecondMomentErrorProb)} \right].
            \end{equation}
            We note the above interval is random because $\actionCount{\round}{\action}$ is random.
            In order to construct bounds on $\actionCount{\round}{\action}$ we use the fact that $\policy_\round \in \left[ \clippingSequence_\round, 1 - \clippingSequence_\round \right]$ so that an integral-sum argument demonstrates
            \begin{equation}
                \sum_{\timeIndex=1}^{\round} \policy_{\timeIndex} \in \left[ \frac{1}{2}\cdot\frac{\round^{1 - \clippingExponent} - 1}{1 - \clippingExponent}, \frac{1}{2}\cdot\frac{\round^{1 - \clippingExponent}}{1 - \clippingExponent}\right].
            \end{equation}
            Therefore, on the event $\goodEvent$, we obtain
            \begin{align}
                \actionCount{\round}{1} \in \countInterval(\round, \countErrorProb) 
                    &\definedAs \left[ \frac{1}{2}\cdot \frac{\round^{1 - \clippingExponent} - 1}{1 - \clippingExponent} - \countConfidenceWidth(\round, \countErrorProb), 
                        \round - \frac{1}{2}\cdot\frac{\round^{1 - \clippingExponent}}{1 - \clippingExponent} + \countConfidenceWidth(\round, \countErrorProb) \right] \nonumber \\
                    &= \left[ \frac{1}{2}\cdot \frac{\round^{1 - \clippingExponent} - 1}{1 - \clippingExponent} - \sqrt{\round \cdot \loglogTerm(\round, \countErrorProb)}, 
                        \round - \frac{1}{2}\cdot\frac{\round^{1 - \clippingExponent}}{1 - \clippingExponent} + \sqrt{\round \cdot \loglogTerm(\round, \countErrorProb)}\right],
            \end{align}
            where we have set $\loglogTerm(\round, \errorProb) = \sqrt{.7225\left( \log \log \round + 0.72 \log \frac{5.2}{\errorProb}\right)}$.
            
            Our strategy moving forward will be to use these bounds on $\actionCount{\round}{1}$ to construct a time $\clippingPhaseStoppingTime$ such that for all $\round \geq \clippingPhaseStoppingTime$, we have $\unclippedPolicy_{\round + 1} \in \left[ \clippingSequence_{\round + 1}, 1 - \clippingSequence_{\round + 1}\right]$.
            We demonstrate how to do so in order to guarantee $\unclippedPolicy_{\round + 1} \geq \clippingSequence_{\round + 1}$ as the other case is entirely analogous.
            Observe that our initial (random) lower bound on $\unclippedPolicy_{\round + 1}$ together with our bounds on $\actionCount{\round}{1}$ imply that on the event $\goodEvent$, we have
            \begin{align} 
                \unclippedPolicy_{\round + 1} 
                    &\geq \min_{\countIndex \in \countInterval(\round, \countErrorProb)} 
                        \frac{\sqrtSecondMoment{1} - \sqrt{\frac{\loglogTerm(\highlight{\countIndex}, \sqrtSecondMomentErrorProb)}{\sqrtSecondMoment{1} \cdot \countIndex}}}
                        {\sqrtSecondMoment{0} + \sqrt{\frac{\loglogTerm(\highlight{\round - \countIndex}, \sqrtSecondMomentErrorProb)}{\sqrtSecondMoment{0} \cdot (\round - \countIndex)}} 
                            + \sqrtSecondMoment{1} - \sqrt{\frac{\loglogTerm(\highlight{\countIndex}, \sqrtSecondMomentErrorProb)}{\sqrtSecondMoment{1} \cdot \countIndex}}} \nonumber \\
                    &\geq \min_{\countIndex \in \countInterval(\round, \countErrorProb)} 
                        \frac{\sqrtSecondMoment{1} - \sqrt{\frac{\loglogTerm(\highlight{\round}, \sqrtSecondMomentErrorProb)}{\sqrtSecondMoment{1} \cdot \countIndex}}}
                        {\sqrtSecondMoment{0} + \sqrt{\frac{\loglogTerm(\highlight{\round}, \sqrtSecondMomentErrorProb)}{\sqrtSecondMoment{0} \cdot (\round - \countIndex)}} 
                            + \sqrtSecondMoment{1} -  \sqrt{\frac{\loglogTerm(\highlight{\round}, \sqrtSecondMomentErrorProb)}{\sqrtSecondMoment{1} \cdot \countIndex}}} \label{eq:count-interval-optimization-problem},
            \end{align}
            where the final inequality follows from the monotonic properties of the map $x,y \mapsto \frac{x}{x + y}$.
            Therefore, our objective is to upper bound the quantity 
            \begin{align}
                \clippingPhaseStoppingTimeLower &\definedAs \min \left\{ \round :  \min_{\countIndex \in \countInterval(\round, \countErrorProb)} 
                        \frac{\sqrtSecondMoment{1} - \sqrt{\frac{\loglogTerm(\round, \sqrtSecondMomentErrorProb)}{\sqrtSecondMoment{1} \cdot \countIndex}}}
                        {\sqrtSecondMoment{0} + \sqrt{\frac{\loglogTerm(\round, \sqrtSecondMomentErrorProb)}{\sqrtSecondMoment{0} \cdot (\round - \countIndex)}} 
                            + \sqrtSecondMoment{1} -  \sqrt{\frac{\loglogTerm(\round, \sqrtSecondMomentErrorProb)}{\sqrtSecondMoment{1} \cdot \countIndex}}} \geq \frac{1}{2} (\round + 1)^{-\clippingExponent} \right\} \\
                &\leq \min \left\{ \round :  \min_{\countIndex \in \countInterval(\round, \countErrorProb)} 
                        \frac{\sqrtSecondMoment{1} - \sqrt{\frac{\loglogTerm(\round, \sqrtSecondMomentErrorProb)}{\sqrtSecondMoment{1} \cdot \countIndex}}}
                        {\sqrtSecondMoment{0} + \sqrt{\frac{\loglogTerm(\round, \sqrtSecondMomentErrorProb)}{\sqrtSecondMoment{0} \cdot (\round - \countIndex)}} 
                            + \sqrtSecondMoment{1} -  \sqrt{\frac{\loglogTerm(\round, \sqrtSecondMomentErrorProb)}{\sqrtSecondMoment{1} \cdot \countIndex}}} \geq \frac{1}{2} \round^{-\clippingExponent} \right\} \label{eq:lower-policy-stopping-time},         
            \end{align}
            where the inequality follows from the fact that the LHS in increasing in $\round$ and the RHS is decreasing in $\round$.
            Letting $\optimalCountIndex$ denote the minimizer of equation~\eqref{eq:count-interval-optimization-problem}, by applying Lemma~\ref{lem:count-interval-optimization} we observe that
            \begin{equation}
                \optimalCountIndex \in \left\{ \frac{1}{2}\cdot \frac{\round^{1 - \clippingExponent} - 1}{1 - \clippingExponent} - \sqrt{\round \cdot \loglogTerm(\round, \countErrorProb)}, 
                        \round - \frac{1}{2}\cdot\frac{\round^{1 - \clippingExponent}}{1 - \clippingExponent} + \sqrt{\round \cdot \loglogTerm(\round, \countErrorProb)} \right\}.
            \end{equation}
            Therefore, we can compute an upper bound for each of the two cases so that taking the maximum of these bounds will result in an upper bound on equation~\eqref{eq:lower-policy-stopping-time}.



            We will demonstrate this for the case $\optimalCountIndex = \frac{1}{2}\cdot \frac{\round^{1 - \clippingExponent}}{1 - \clippingExponent} - \sqrt{\round \cdot \loglogTerm(\round, \countErrorProb)}$ since the other case is similar.
            After plugging this value of $\optimalCountIndex$ into equation~\eqref{eq:lower-policy-stopping-time}, rearranging terms shows that
            \begin{equation}\label{eq:lcs-1-step-1}
                \min \left\{ \round : \sqrtSecondMoment{1} \geq \frac{1}{2}\round^{-\clippingExponent}\left( \sqrtSecondMoment{0} + \sqrtSecondMoment{1} \right) + \frac{1}{2}\round^{-\frac{2 \clippingExponent + 1}{2}} \left( \frac{\loglogTerm\left( \round, \sqrtSecondMomentErrorProb \right)}{\sqrtSecondMoment{0} \fastIncreaseTerm\left( \round, \countErrorProb, \clippingExponent \right)} \right)^{\frac{1}{2}} + \round^{\frac{\clippingExponent - 1}{2}}\left( 1 - \round^{-\clippingExponent} \right) \left( \frac{\loglogTerm\left( \round, \sqrtSecondMomentErrorProb \right)}{\sqrtSecondMoment{1} \slowIncreaseTerm\left( \round, \countErrorProb, \clippingExponent \right)} \right)^{\frac{1}{2}}\right\},
            \end{equation}
            where 
            \begin{align*}
                \fastIncreaseTerm(\round, \errorProb, \clippingExponent) &= 1 + \round^{-\frac{1}{2}}\sqrt{\loglogTerm(\round, \errorProb)} + \frac{\round^{-1} - \round^{- \clippingExponent}}{2( 1 - \clippingExponent)} , \\
                \slowIncreaseTerm(\round, \errorProb, \clippingExponent) &= \frac{1 - \round^{\clippingExponent - 1}}{2(1 - \clippingExponent)} - \round^{\frac{2\clippingExponent - 1}{2}}\sqrt{\ell(\round, \errorProb)} .
            \end{align*}
            Defining $\worstCaseClippingExponent = \min \left\{ \clippingExponent, \frac{1 - \clippingExponent}{2} \right\}$, we can upper bound the RHS of equation~\eqref{eq:lcs-1-step-1} with
            \begin{equation}
                \round^{-\worstCaseClippingExponent} \left( \left( \sqrtSecondMoment{0} + \sqrtSecondMoment{1} \right) + \left( \frac{\loglogTerm\left( \round, \sqrtSecondMomentErrorProb \right)}{\sqrtSecondMoment{0} \fastIncreaseTerm\left( \round, \countErrorProb, \worstCaseClippingExponent \right)} \right)^{\frac{1}{2}} + \left( \frac{\loglogTerm\left( \round, \sqrtSecondMomentErrorProb \right)}{\sqrtSecondMoment{1} \slowIncreaseTerm\left( \round, \countErrorProb, \worstCaseClippingExponent \right)} \right)^{\frac{1}{2}} \right). 
            \end{equation}
            Rearranging terms demonstrates that it is sufficient to bound
            \begin{equation}\label{eq:lcs-1-step-2}
                \min \left\{ \round : \round^\worstCaseClippingExponent \geq  \frac{1}{\neymanPolicy} + \frac{\sqrt{\loglogTerm(\round, \sqrtSecondMomentErrorProb)}}{\sqrtSecondMoment{1}}\left[ \left( \frac{1}{\sqrtSecondMoment{0} \fastIncreaseTerm\left( \round, \countErrorProb, \worstCaseClippingExponent \right)} \right)^{\frac{1}{2}} + \left( \frac{1}{\sqrtSecondMoment{1} \slowIncreaseTerm\left( \round, \countErrorProb, \worstCaseClippingExponent \right)} \right)^{\frac{1}{2}} \right] \right\}.
            \end{equation}
            Squaring both sides and applying the inequality $(a + b)^{2} \leq a^{2} + b^{2}$ twice shows that we can bound
            \begin{equation}\label{eq:lcs-1-step-final}
                \min \left\{ \round : \round^{2\worstCaseClippingExponent} \geq  \frac{2}{\neymanPolicy^{2}} + \frac{4 \loglogTerm(\round, \sqrtSecondMomentErrorProb)}{\secondMoment{1}}\left[ \frac{1}{\sqrtSecondMoment{0} \fastIncreaseTerm\left( \round, \countErrorProb, \worstCaseClippingExponent \right)} + \frac{1}{\sqrtSecondMoment{1} \slowIncreaseTerm\left( \round, \countErrorProb, \worstCaseClippingExponent \right)} \right] \right\}.
            \end{equation}
            Next, we apply Lemma~\ref{lem:fast-increase-constant-lower-bound}~and~\ref{lem:slow-increase-constant-lower-bound} which show that when 
            \begin{equation*}
                \round \geq O\left( \max \left\{ \left( \frac{1}{1 - \worstCaseClippingExponent} \right)^{\frac{1}{\worstCaseClippingExponent}}, \left( \log\left( \frac{1}{\countErrorProb} \right) \right)^{\frac{1}{1 - 2\worstCaseClippingExponent}} \right\} \right), 
            \end{equation*}
            we have that $\slowIncreaseTerm(\round, \countErrorProb, \worstCaseClippingExponent), \fastIncreaseTerm(\round, \countErrorProb, \worstCaseClippingExponent) \geq \frac{1}{2}$.
            Applying Lemma~\ref{lem:polynomial-loglog-inversion} to equation~\eqref{eq:lcs-1-step-final} using the above bounds on $\slowIncreaseTerm, \fastIncreaseTerm$ demonstrates that 
            \begin{equation}
                \clippingPhaseStoppingTimeLower \leq \clippingPhaseStoppingTimeBaseLower^{\frac{1}{2\worstCaseClippingExponent}}
            \end{equation}
            where
            \begin{align}
                \clippingPhaseStoppingTimeBaseLower &\definedAs \neymanPolicyConstant{1} + \neymanPolicyConstant{2} \cdot \neymanPolicyConstant{3} \cdot \log \log \neymanPolicyConstant{1} \label{eq:clipping-phase-stopping-time-base-lower-definition}, \\
                \neymanPolicyConstant{1} &= \frac{2}{\neymanPolicy^2} + \frac{4}{\secondMoment{1}}\left( \frac{1}{\sqrtSecondMoment{0}} + \frac{1}{\sqrtSecondMoment{1}} \right) \log \left( \frac{5.2}{\sqrtSecondMomentErrorProb} \right), \\
                \neymanPolicyConstant{2} &= \frac{4}{\secondMoment{1}}\left( \frac{1}{\sqrtSecondMoment{0}} + \frac{1}{\sqrtSecondMoment{1}} \right), \\
                \neymanPolicyConstant{3} &= \frac{\log\log \neymanPolicyConstant{1}  - \log\left( 2\worstCaseClippingExponent \right)}{\log\log\neymanPolicyConstant{1}} \cdot \frac{\neymanPolicyConstant{1} \log  \neymanPolicyConstant{1}}{\neymanPolicyConstant{1} \log \neymanPolicyConstant{1} - \neymanPolicyConstant{2}}.
            \end{align}
            Repeating the argument for the other choice of $\optimalCountIndex$ yields the same result.
            
            Finally, we can repeat the above argument for the upper bound on $\unclippedPolicy_{\round + 1}$ which shows that $\clippingPhaseStoppingTimeUpper \leq \clippingPhaseStoppingTimeBaseUpper^{\frac{1}{2 \worstCaseClippingExponent}}$, where
            \begin{align}
                \clippingPhaseStoppingTimeBaseUpper &\definedAs \neymanPolicyConstant{1} + \neymanPolicyConstant{2} \cdot \neymanPolicyConstant{3} \cdot \log \log \neymanPolicyConstant{1} \label{eq:clipping-phase-stopping-time-base-upper-definition} \\
                \neymanPolicyConstant{1} &= \frac{2}{\left( 1 - \neymanPolicy \right)^2} + \frac{4}{\secondMoment{0}}\left( \frac{1}{\sqrtSecondMoment{0}} + \frac{1}{\sqrtSecondMoment{1}} \right) \log \left( \frac{5.2}{\sqrtSecondMomentErrorProb} \right)\\
                \neymanPolicyConstant{2} &= \frac{4}{\secondMoment{0}}\left( \frac{1}{\sqrtSecondMoment{0}} + \frac{1}{\sqrtSecondMoment{1}} \right).
            \end{align}
            
            Letting $\clippingPhaseStoppingTime = \max \left\{ \clippingPhaseStoppingTimeLower, \clippingPhaseStoppingTimeUpper \right\}$ gives the desired result.
            \end{proof}
            
        \subsection{Concentration Phase}
            In this section, we will prove Lemma~\ref{lem:concentration-phase-policy-bound} which we restate for the readers convenience below.
            \concentrationphasepolicybound* 
            \begin{proof}
                To begin, we fix $\round \geq \clippingPhaseStoppingTime$ and let $\timeIndex \in [\clippingPhaseStoppingTime, \round - 1]$.
                Invoking Lemma~\ref{lem:bounds-on-policy} implies that on the event $\goodEvent$ we have
                \begin{equation}
                    \actionCount{\timeIndex}{1} \in \left[ \policyMin \cdot \timeIndex - \sqrt{\timeIndex \loglogTerm(\timeIndex, \countErrorProb)}, \timeIndex - \policyMin \cdot \timeIndex + \sqrt{\timeIndex \loglogTerm(\timeIndex, \countErrorProb)} \right].
                \end{equation}
                We will use this to construct a lower bound on $\policy_{\timeIndex + 1}$ by solving the optimization problem in equation~\eqref{eq:count-interval-optimization-problem} using the interval defined above.
               Applying Lemma~\ref{lem:count-interval-optimization}, we can construct a lower bound by considering $\actionCount{\timeIndex}{1} \in \left\{ \policyMin \cdot \timeIndex - \sqrt{\timeIndex \loglogTerm(\timeIndex, \countErrorProb)}, \timeIndex - \policyMin \cdot \timeIndex - \sqrt{\timeIndex \loglogTerm(\timeIndex, \countErrorProb)} \right\}$.
               We demonstrate this for $\actionCount{\timeIndex}{1} = \policyMin \cdot \timeIndex - \sqrt{\timeIndex \loglogTerm(\timeIndex, \countErrorProb)}$.
               In this case, we have that
               \begin{align}
                   \policy_{\timeIndex + 1} 
                        &\geq \frac{\sqrtSecondMoment{1} - \sqrt{\frac{\loglogTerm(\timeIndex, \sqrtSecondMomentErrorProb)}{\sqrtSecondMoment{1} \actionCount{\timeIndex}{1}}}}{\sqrtSecondMoment{0} + \sqrt{\frac{\loglogTerm(\timeIndex, \sqrtSecondMomentErrorProb)}{\sqrtSecondMoment{1} (\timeIndex - \actionCount{\timeIndex}{1})}} + \sqrtSecondMoment{1} - \sqrt{\frac{\loglogTerm(\timeIndex, \sqrtSecondMomentErrorProb)}{\sqrtSecondMoment{1} \actionCount{\timeIndex}{1}}}} \\
                        &= \neymanPolicy \cdot \frac{\sqrtSecondMoment{0} + \sqrtSecondMoment{1}}{\sqrtSecondMoment{0} + \sqrtSecondMoment{1} + \combinedConcentrationConstantOne(\policyMin, \timeIndex)\sqrt{\frac{\loglogTerm(\timeIndex, \sqrtSecondMomentErrorProb)}{\timeIndex}}} - 
                            \frac{\concentrationConstantOne_1(\policyMin, \timeIndex) \sqrt{\frac{\loglogTerm(\timeIndex, \countErrorProb)}{\timeIndex}}}{\sqrtSecondMoment{0} + \sqrtSecondMoment{1} + \combinedConcentrationConstantOne(\policyMin, \timeIndex)\sqrt{\frac{\loglogTerm(\timeIndex, \sqrtSecondMomentErrorProb)}{\timeIndex}}} \\
                        &= \neymanPolicy \cdot \frac{\sqrt{\frac{\timeIndex}{\loglogTerm(\timeIndex, \sqrtSecondMomentErrorProb)}} \left( \sqrtSecondMoment{0} + \sqrtSecondMoment{1} \right)}{\sqrt{\frac{\timeIndex}{\loglogTerm(\timeIndex, \sqrtSecondMomentErrorProb)}} \left( \sqrtSecondMoment{0} + \sqrtSecondMoment{1} \right) + \combinedConcentrationConstantOne(\policyMin, \timeIndex)} - 
                            \frac{\concentrationConstantOne_1(\policyMin, \timeIndex)}{\sqrt{\frac{\timeIndex}{\loglogTerm(\timeIndex, \sqrtSecondMomentErrorProb)}} \left( \sqrtSecondMoment{0} + \sqrtSecondMoment{1} \right) + \combinedConcentrationConstantOne(\policyMin, \timeIndex)} \\
                        &\definedAs \underline{\policy}_{\timeIndex + 1},
               \end{align}
                where we have defined
               \begin{align}
                    \concentrationConstantOne_1(\policy, \timeIndex) &= \sqrt{\frac{1}{\sqrtSecondMoment{1}\left( \policy - \sqrt{\frac{\loglogTerm(\timeIndex, \countErrorProb)}{\timeIndex}} \right)}}, \\
                    \concentrationConstantOne_0(\policy, \timeIndex) &= \sqrt{\frac{1}{\sqrtSecondMoment{0}\left( (1 - \policy) - \sqrt{\frac{\loglogTerm(\timeIndex, \countErrorProb)}{\timeIndex}} \right)}}, \\
                    \combinedConcentrationConstantOne(\policy, \timeIndex) &= \concentrationConstantOne_0(\policy, \timeIndex) - \concentrationConstantOne_1(\policy, \timeIndex).
               \end{align}
                Using these bounds, on $\policy_{\timeIndex + 1}$ we observe that on the event $\goodEvent$ we have 
                \begin{align}
                    \actionCount{\round}{1} \geq \sum_{\timeIndex = 1}^{\round} \policy_\timeIndex - \sqrt{\round \loglogTerm(\round, \sqrtSecondMomentErrorProb)}
                        &= \sum_{\timeIndex = 1}^{\clippingPhaseStoppingTime} \policy_{\timeIndex} + \sum_{\timeIndex = \clippingPhaseStoppingTime + 1}^{\round} \policy_\timeIndex - \sqrt{\round \loglogTerm(\round, \sqrtSecondMomentErrorProb)}\\
                        &\geq \frac{\clippingPhaseStoppingTime^{1 - \clippingExponent} - 1}{2\left( 1 - \clippingExponent\right)} + \sum_{\timeIndex = \clippingPhaseStoppingTime + 1}^{\round} \underline{\policy}_{\timeIndex} - \sqrt{\round \loglogTerm(\round, \sqrtSecondMomentErrorProb)}
                \end{align}
                We bound $\sum_{\timeIndex = \clippingPhaseStoppingTime + 1}^{\round} \underline{\policy}_{\timeIndex}$ using Lemma~\ref{lem:refined-policy-tp1-lower-bound} so that

                \begin{equation}
                    \begin{aligned}
                        \actionCount{\round}{1} 
                        &\geq \neymanPolicy \cdot \round 
                        + \frac{\clippingPhaseStoppingTime^{1 - \clippingExponent} - 1}{2\left( 1 - \clippingExponent\right)} 
                        - \neymanPolicy\left( \clippingPhaseStoppingTime - 1 \right) \\
                        &\qquad - \sqrt{\round \loglogTerm(\round, \sqrtSecondMomentErrorProb)} 
                        \left( 
                            2 \frac{\combinedConcentrationConstantOne(\policyMin) + \concentrationConstantOne_1(\policyMin, \clippingPhaseStoppingTime)}
                            {\sqrtSecondMoment{0} + \sqrtSecondMoment{1}} + 1 
                        \right) \\
                        &\qquad - \frac{
                            \concentrationConstantOne_1(\policyMin, \clippingPhaseStoppingTime)
                        }{
                            \sqrt{\frac{\clippingPhaseStoppingTime}{\loglogTerm(\clippingPhaseStoppingTime, \sqrtSecondMomentErrorProb)}} 
                            \left( \sqrtSecondMoment{0} + \sqrtSecondMoment{1} \right) 
                            + \combinedConcentrationConstantOne(\policyMin, \clippingPhaseStoppingTime)
                        }\\
                        &\definedAs \neymanPolicy \cdot \round - \deviationConstant,
                    \end{aligned}
                \end{equation}
            where we have defined $\combinedConcentrationConstantOne(\policyMin) = \lim_{\round \rightarrow \infty} \combinedConcentrationConstantOne(\policyMin, \round)$.
            By plugging this value of $\actionCount{\round}{1}$ into equation~\eqref{eq:count-interval-optimization-problem}, we obtain
            \begin{equation}
                \policy_{\round + 1} \geq \underline{\policy}_{\round + 1} \definedAs \neymanPolicy \cdot \frac{\sqrt{\frac{\round}{\loglogTerm(\round, \sqrtSecondMomentErrorProb)}} \left( \sqrtSecondMoment{0} + \sqrtSecondMoment{1} \right)}{\sqrt{\frac{\round}{\loglogTerm(\round, \sqrtSecondMomentErrorProb)}} \left( \sqrtSecondMoment{0} + \sqrtSecondMoment{1} \right) + \combinedConcentrationConstantTwo(\neymanPolicy, \round)} - 
                            \frac{\concentrationConstantTwo_1(\neymanPolicy, \round)}{\sqrt{\frac{\round}{\loglogTerm(\round, \sqrtSecondMomentErrorProb)}} \left( \sqrtSecondMoment{0} + \sqrtSecondMoment{1} \right) + \combinedConcentrationConstantTwo(\neymanPolicy, \round)} \\
            \end{equation}
            where we have defined
            \begin{align}
                \concentrationConstantTwo_0(\policy, \round) = \sqrt{\frac{1}{\sqrtSecondMoment{0} \left( (1 - \policy) + \frac{\deviationConstant}{\round} \right)}} \\
                \concentrationConstantTwo_1(\policy, \round) = \sqrt{\frac{1}{\sqrtSecondMoment{1}\left( \policy - \frac{\deviationConstant}{\round} \right)}}, \\
                \combinedConcentrationConstantTwo(\policy, \round) = \concentrationConstantTwo_0(\policy, \round) - \concentrationConstantTwo_1(\policy, \round)
            \end{align}
            Therefore, we have
            \begin{align}
                \neymanPolicy - \policy_{\round + 1}
                    &\leq \neymanPolicy - \underline{\policy}_{\round + 1} \\
                    &= \sqrt{\frac{\loglogTerm(\round, \sqrtSecondMomentErrorProb)}{\round}}
                    \left( 
                        \frac{\neymanPolicy \concentrationConstantTwo_0 + (1 - \neymanPolicy)\concentrationConstantTwo_1}
                        {\sqrtSecondMoment{0} + \sqrtSecondMoment{1} + \sqrt{\frac{\loglogTerm(\round, \sqrtSecondMomentErrorProb)}{\round}}(\concentrationConstantTwo_0 - \concentrationConstantTwo_1)}
                    \right) \\
                    &= \sqrt{\frac{\loglogTerm(\round, \sqrtSecondMomentErrorProb)}{\round}}
                    \left( 
                        \frac{\neymanPolicy}{\sqrt{\sqrtSecondMoment{0}\left( \left( 1 - \neymanPolicy \right) + \frac{\deviationConstant}{\round}\right)}} 
                        + \frac{1 - \neymanPolicy}{\sqrt{\sqrtSecondMoment{1}\left( \neymanPolicy  - \frac{\deviationConstant}{\round}\right)}}
                    \right)
                    \left( 
                        \frac{1}{\sqrtSecondMoment{0} + \sqrtSecondMoment{1} + \sqrt{\frac{\loglogTerm(\round, \sqrtSecondMomentErrorProb)}{\round}}(\concentrationConstantTwo_0 - \concentrationConstantTwo_1)} 
                    \right) \\
                    &\leq 4 \sqrt{\frac{\loglogTerm(\round, \delta)}{\round}}
                    \left( 
                        \frac{1}{\sqrt{\sqrtSecondMoment{0}\left( 1 - \neymanPolicy \right)}} 
                        + \frac{1}{\sqrt{\sqrtSecondMoment{1} \neymanPolicy}} 
                    \right)
                    \left( 
                        \frac{1}{\sqrtSecondMoment{0} + \sqrtSecondMoment{1}} 
                    \right)
            \end{align}
            where the final inequality follows from the application of Lemmas~\ref{lem:deviation-constant-bound}~and~\ref{lem:second-deviation-term-bound} which shows that when $\round \geq \bigO(\clippingPhaseStoppingTime)$ we have that $\frac{\deviationConstant}{\round} \leq \frac{1}{2}\neymanPolicy$.
            \end{proof}

            \begin{lemma}~\label{lem:bounds-on-policy}
                Suppose we run \clipSMT with $\clippingSequence_\round = \frac{1}{2}\round^{-\clippingExponent}$ for some $\clippingExponent \in (0, 1)$ and let $\worstCaseClippingExponent = \min \left( \clippingExponent, \frac{1 - \clippingExponent}{2} \right)$.
                Then, on the event $\goodEvent$, for all $\round \geq 1$, we have that $\policy_\round \in [\policyMin, \policyMax] = \left[ \frac{1}{2}\clippingPhaseStoppingTimeBaseLower^{-\frac{\clippingExponent}{2\worstCaseClippingExponent}}, 1 - \frac{1}{2}\clippingPhaseStoppingTimeBaseUpper^{-\frac{\clippingExponent}{2\worstCaseClippingExponent}} \right]$ where $\clippingPhaseStoppingTimeBaseLower, \clippingPhaseStoppingTimeBaseUpper$ are respectively defined in equations~\eqref{eq:clipping-phase-stopping-time-base-lower-definition}~and~\eqref{eq:clipping-phase-stopping-time-base-upper-definition}.
            \end{lemma}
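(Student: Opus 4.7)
The plan is to establish the lower bound $\policy_\round \geq \policyMin$ for every $\round \geq 1$; the upper bound $\policy_\round \leq \policyMax$ follows by a symmetric argument that swaps the roles of the two arms, which also swaps $\sqrtSecondMoment{0} \leftrightarrow \sqrtSecondMoment{1}$ and $\neymanPolicy \leftrightarrow 1 - \neymanPolicy$, yielding $\clippingPhaseStoppingTimeBaseUpper$ in place of $\clippingPhaseStoppingTimeBaseLower$. I would partition the timeline at the clipping stopping time $\clippingPhaseStoppingTime$ controlled by Lemma~\ref{lem:clipping-phase-length-bound}.

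The first (easy) range is $\round \leq \clippingPhaseStoppingTime$. Here the clip operation in Algorithm~\ref{alg:clipped-second-moment-tracking} directly gives
$\policy_\round \geq \clippingSequence_\round = \tfrac{1}{2}\round^{-\clippingExponent}$.
Since $\round \mapsto \clippingSequence_\round$ is decreasing, this implies $\policy_\round \geq \clippingSequence_{\clippingPhaseStoppingTime} = \tfrac{1}{2}\clippingPhaseStoppingTime^{-\clippingExponent}$. Plugging in the bound $\clippingPhaseStoppingTime \leq \clippingPhaseStoppingTimeBaseLower^{1/(2\worstCaseClippingExponent)}$ from Lemma~\ref{lem:clipping-phase-length-bound} then produces exactly $\policy_\round \geq \tfrac{1}{2}\clippingPhaseStoppingTimeBaseLower^{-\clippingExponent/(2\worstCaseClippingExponent)} = \policyMin$.

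The second (harder) range is $\round > \clippingPhaseStoppingTime$, where by definition clipping is inactive so $\policy_\round = \unclippedPolicy_\round$. The strategy is to reuse, rather than re-derive, the explicit lower envelope on $\unclippedPolicy_{\round+1}$ established in the proof of Lemma~\ref{lem:clipping-phase-length-bound}, namely the expression on the right-hand side of equation~\eqref{eq:count-interval-optimization-problem}, which after minimizing over $\countIndex \in \countInterval(\round,\countErrorProb)$ becomes a deterministic function $g(\round)$ of $\round$ alone (on the good event $\goodEvent$). Two facts about $g$ are needed: (i) by the very definition of $\clippingPhaseStoppingTime$, $g(\clippingPhaseStoppingTime) \geq \clippingSequence_{\clippingPhaseStoppingTime+1}$, and (ii) $g$ is monotonically non-decreasing past $\clippingPhaseStoppingTime$ because the concentration radii $\sqrt{\loglogTerm(\round,\sqrtSecondMomentErrorProb)/(\sqrtSecondMoment{a}\countIndex)}$ and $\sqrt{\loglogTerm(\round,\sqrtSecondMomentErrorProb)/(\sqrtSecondMoment{a}(\round-\countIndex))}$ both shrink as $\round$ (and hence $\countIndex$ and $\round-\countIndex$, via $\countInterval(\round,\countErrorProb)$) grow, driving $g(\round) \to \neymanPolicy$ from below. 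Combining (i) and (ii) with the same plug-in of $\clippingPhaseStoppingTime \leq \clippingPhaseStoppingTimeBaseLower^{1/(2\worstCaseClippingExponent)}$ yields $\policy_\round \geq g(\round) \geq g(\clippingPhaseStoppingTime) \geq \clippingSequence_{\clippingPhaseStoppingTime+1} \geq \policyMin$ (the last inequality is cosmetic: $\clippingSequence_{\clippingPhaseStoppingTime+1}$ differs from $\clippingSequence_{\clippingPhaseStoppingTime}$ by a factor $((\clippingPhaseStoppingTime+1)/\clippingPhaseStoppingTime)^{\clippingExponent}$, which is absorbed into the constant $\tfrac{1}{2}$ in the definition of $\policyMin$).

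The main obstacle is fact (ii), the monotonicity of $g$. Treating $\countIndex$ as a variable, the minimizer over $\countIndex \in \countInterval(\round,\countErrorProb)$ is one of the two endpoints (by Lemma~\ref{lem:count-interval-optimization}, which is already invoked in the proof of Lemma~\ref{lem:clipping-phase-length-bound}), so $g$ splits into two branches and each must be handled separately. For each branch, one substitutes the endpoint as an explicit function of $\round$, yielding a ratio whose numerator and denominator are both of the form $\sqrtSecondMoment{a} \pm \round^{-\min(\clippingExponent, (1-\clippingExponent)/2)}\cdot(\text{polylog})$; straightforward differentiation (or checking that the deviation terms dominate the correction from $\round \mapsto \round+1$) gives the monotonicity past a $\tildeBigO(1)$ warm-up, which is already absorbed in $\clippingPhaseStoppingTime$. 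An alternative clean route is to re-run the fixed-point calculation of Lemma~\ref{lem:clipping-phase-length-bound} with the target $\clippingSequence_{\clippingPhaseStoppingTime}$ in place of $\clippingSequence_{\round+1}$ on the right-hand side of equation~\eqref{eq:lower-policy-stopping-time}; because the target is now a constant (not a decreasing sequence in $\round$), the resulting inequality is tighter and shows $g(\round) \geq \clippingSequence_{\clippingPhaseStoppingTime}$ for all $\round \geq \clippingPhaseStoppingTime$ directly.
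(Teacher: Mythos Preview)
Your proposal is correct and follows essentially the same two-phase argument as the paper: bound $\policy_\round$ by $\clippingSequence_{\clippingPhaseStoppingTime}$ for $\round \leq \clippingPhaseStoppingTime$ via clipping, argue that the lower envelope is monotone non-decreasing afterward so the minimum occurs at $\clippingPhaseStoppingTime$, and then plug in $\clippingPhaseStoppingTime \leq \clippingPhaseStoppingTimeBaseLower^{1/(2\worstCaseClippingExponent)}$ from Lemma~\ref{lem:clipping-phase-length-bound}. The only minor difference is that the paper invokes monotonicity of the \emph{random} bounds in equation~\eqref{eq:policy-ci} directly (noting that both counts $\actionCount{\round}{a}$ are non-decreasing in $\round$, hence the confidence widths shrink), rather than of the deterministic envelope $g$ you work with---this makes the monotonicity step a one-liner and sidesteps the branch-by-branch check you flag as the main obstacle.
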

            \begin{proof}
                During the clipping phase, we know that $\policy_\round \in \left[ \clippingSequence_\round, 1 - \clippingSequence_\round \right]$.
                Additionally, once the clipping phase ends, we know that $\unclippedPolicy_\round = \policy_\round$ so that 
                \begin{equation}
                    \policy_{\round + 1} \in  \left[\frac{\sqrtSecondMoment{1} - \sqrtSecondMomentConfidenceWidth(\actionCount{\round}{1}, \sqrtSecondMomentErrorProb)}{\sqrtSecondMoment{0} + \sqrtSecondMomentConfidenceWidth(\actionCount{\round}{0}, \sqrtSecondMomentErrorProb) + \sqrtSecondMoment{1} - \sqrtSecondMomentConfidenceWidth(\actionCount{\round}{1}, \sqrtSecondMomentErrorProb)}, \frac{\sqrtSecondMoment{1} + \sqrtSecondMomentConfidenceWidth(\actionCount{\round}{1}, \sqrtSecondMomentErrorProb)}{\sqrtSecondMoment{0} - \sqrtSecondMomentConfidenceWidth(\actionCount{\round}{0}, \sqrtSecondMomentErrorProb) + \sqrtSecondMoment{1} + \sqrtSecondMomentConfidenceWidth(\actionCount{\round}{1}, \sqrtSecondMomentErrorProb)} \right].
                \end{equation}
                It is easy to see that the above bonunds are monotonic in $\round$---the lower bound is monotonically increasing and the upper bound is monotonically decreasing---which implies that $\policy_\round$ takes its minimum and maximum values at the end of the clipping phase.
                Therefore, we see that for all $\round \geq 1$, we have that
                \begin{equation}
                    1 - \frac{1}{2}\clippingPhaseStoppingTimeBaseUpper^{-\frac{\clippingExponent}{2 \worstCaseClippingExponent}} \geq 1 - \frac{1}{2}\clippingPhaseStoppingTime^{-\clippingExponent} = 1 - \clippingSequence_{\clippingPhaseStoppingTime} \geq  \policy_\round \geq \clippingSequence_{\clippingPhaseStoppingTime} = \frac{1}{2}\clippingPhaseStoppingTime^{-\clippingExponent} \geq \frac{1}{2}\clippingPhaseStoppingTimeBase^{-\frac{\clippingExponent}{2 \worstCaseClippingExponent}},
                \end{equation}
                where the first and last inequality follows from applying Lemma~\ref{lem:clipping-phase-length-bound} which shows that $\clippingPhaseStoppingTime \leq \clippingPhaseStoppingTimeBase^{\frac{1}{2\worstCaseClippingExponent}}$.
            \end{proof}

\section{Supporting Lemmas} \label{sec:supporting-lemma}
    \subsection{Intermediate Steps}
        \begin{lemma}\label{lem:refined-policy-tp1-lower-bound}
                Define
                \begin{equation}
                    \underline \policy_{\timeIndex + 1} = \neymanPolicy \cdot \frac{\sqrt{\frac{\timeIndex}{\loglogTerm(\timeIndex, \sqrtSecondMomentErrorProb)}} \left( \sqrtSecondMoment{0} + \sqrtSecondMoment{1} \right)}{\sqrt{\frac{\timeIndex}{\loglogTerm(\timeIndex, \sqrtSecondMomentErrorProb)}} \left( \sqrtSecondMoment{0} + \sqrtSecondMoment{1} \right) + \combinedConcentrationConstantOne(\policy_{\min}, \timeIndex)} - 
                    \frac{\concentrationConstantOne_1(\policy_{\min}, \timeIndex)}{\sqrt{\frac{\timeIndex}{\loglogTerm(\timeIndex, \sqrtSecondMomentErrorProb)}} \left( \sqrtSecondMoment{0} + \sqrtSecondMoment{1} \right) + \combinedConcentrationConstantOne(\policy_{\min}, \timeIndex)}.
                \end{equation}
                Then we have that 
                    \begin{equation}
                        \sum_{\timeIndex = \clippingPhaseStoppingTime + 1}^{\round} \underline{\policy}_{\timeIndex} \geq \neymanPolicy (\round - \clippingPhaseStoppingTime - 1) - 2\sqrt{\round \loglogTerm(\round, \sqrtSecondMomentErrorProb)}\left( \frac{\combinedConcentrationConstantOne(\policyMin)  + \concentrationConstantOne_1(\policyMin, \clippingPhaseStoppingTime)}{\sqrtSecondMoment{0} + \sqrtSecondMoment{1}} \right) - \frac{\combinedConcentrationConstantOne(\policy_{\min}, \clippingPhaseStoppingTime)}
                            {\sqrt{\frac{\clippingPhaseStoppingTime}{\loglogTerm(\clippingPhaseStoppingTime, \sqrtSecondMomentErrorProb)}} \left( \sqrtSecondMoment{0} + \sqrtSecondMoment{1} \right) + \combinedConcentrationConstantOne(\policy_{\min}, \clippingPhaseStoppingTime)},
                    \end{equation}
        \end{lemma}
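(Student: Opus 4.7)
The plan is to bound $\underline\policy_{\timeIndex+1}$ pointwise in terms of $\neymanPolicy$ minus a shrinking remainder, and then sum the remainder terms using monotonicity of the concentration constants together with a standard integral estimate. To begin, introduce the shorthand $A = \sqrtSecondMoment{0} + \sqrtSecondMoment{1}$, $s_\timeIndex = \sqrt{\timeIndex / \loglogTerm(\timeIndex, \sqrtSecondMomentErrorProb)}$, $D_\timeIndex = \combinedConcentrationConstantOne(\policy_{\min}, \timeIndex)$, and $B_\timeIndex = \concentrationConstantOne_1(\policy_{\min}, \timeIndex)$. Apply the identity $\frac{x}{x + y} = 1 - \frac{y}{x + y}$ to the first fraction in the definition of $\underline\policy_{\timeIndex+1}$ to get the exact identity
\begin{equation}
    \underline\policy_{\timeIndex + 1} = \neymanPolicy - \frac{\neymanPolicy D_\timeIndex + B_\timeIndex}{s_\timeIndex A + D_\timeIndex}.
\end{equation}
Reindex the sum so that $\sum_{\timeIndex = \clippingPhaseStoppingTime + 1}^{\round} \underline\policy_\timeIndex = \sum_{\timeIndex = \clippingPhaseStoppingTime}^{\round - 1} \underline\policy_{\timeIndex + 1}$, and extract the leading $\neymanPolicy$ term. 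The task then reduces to an upper bound on $\sum_{\timeIndex = \clippingPhaseStoppingTime}^{\round - 1} \frac{\neymanPolicy D_\timeIndex + B_\timeIndex}{s_\timeIndex A + D_\timeIndex}$.

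Next, split this sum into the boundary term at $\timeIndex = \clippingPhaseStoppingTime$, which we keep in closed form and which becomes the last subtracted term in the statement (using $\neymanPolicy \leq 1$ to pass from $\neymanPolicy D_\clippingPhaseStoppingTime + B_\clippingPhaseStoppingTime$ to $D_\clippingPhaseStoppingTime$), and the tail $\timeIndex \in \{\clippingPhaseStoppingTime + 1, \ldots, \round - 1\}$. For the tail, use two monotonicity facts: $\concentrationConstantOne_1(\policy_{\min}, \timeIndex)$ is decreasing in $\timeIndex$ (so $B_\timeIndex \leq \concentrationConstantOne_1(\policy_{\min}, \clippingPhaseStoppingTime)$), and $\combinedConcentrationConstantOne(\policy_{\min}, \timeIndex)$ approaches its limit $\combinedConcentrationConstantOne(\policy_{\min})$ monotonically in a way that gives $\neymanPolicy D_\timeIndex \leq \combinedConcentrationConstantOne(\policy_{\min})$. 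In the denominator, use that once $\timeIndex \geq \clippingPhaseStoppingTime$ the term $s_\timeIndex A$ dominates $|D_\timeIndex|$, yielding $s_\timeIndex A + D_\timeIndex \geq s_\timeIndex A$ and hence
\begin{equation}
    \frac{\neymanPolicy D_\timeIndex + B_\timeIndex}{s_\timeIndex A + D_\timeIndex} \leq \frac{\combinedConcentrationConstantOne(\policy_{\min}) + \concentrationConstantOne_1(\policy_{\min}, \clippingPhaseStoppingTime)}{A} \sqrt{\frac{\loglogTerm(\timeIndex, \sqrtSecondMomentErrorProb)}{\timeIndex}}.
\end{equation}

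Finally, sum this bound over $\timeIndex$ using an integral comparison. Since $\loglogTerm(\cdot, \sqrtSecondMomentErrorProb)$ is increasing, pull out $\sqrt{\loglogTerm(\round, \sqrtSecondMomentErrorProb)}$ and apply $\sum_{\timeIndex = 1}^{\round - 1} \timeIndex^{-1/2} \leq 2\sqrt{\round}$ to obtain the $2\sqrt{\round \loglogTerm(\round, \sqrtSecondMomentErrorProb)}$ factor in the statement. Combining the extracted leading $\neymanPolicy(\round - \clippingPhaseStoppingTime)$ piece, the boundary-term subtraction, and this summed tail bound (and absorbing the off-by-one from $(\round - \clippingPhaseStoppingTime)$ to $(\round - \clippingPhaseStoppingTime - 1)$ into the other error terms) yields the claim. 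The most delicate step is the denominator control $s_\timeIndex A + D_\timeIndex \geq s_\timeIndex A$: when $\combinedConcentrationConstantOne(\policy_{\min}, \timeIndex) < 0$ one must verify that $s_\timeIndex A$ still strictly exceeds $|D_\timeIndex|$ for all $\timeIndex \geq \clippingPhaseStoppingTime$, which ultimately rests on the lower bound on $\clippingPhaseStoppingTime$ provided by Lemma~\ref{lem:clipping-phase-length-bound}.
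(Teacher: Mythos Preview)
Your overall strategy---reindex, apply the identity $\tfrac{x}{x+y}=1-\tfrac{y}{x+y}$ to peel off $\neymanPolicy$, freeze the time-varying concentration constants via monotonicity, and then control the remaining sum by an integral comparison---is exactly the route the paper takes. The difference is that the paper keeps the constant in the denominator and invokes a dedicated integral estimate (its Lemma~\ref{lem:sqrt-integral-sum-upper}) for sums of the form $\sum \frac{c_1}{c_2\sqrt{s}+c_3}$, rather than dropping the additive constant as you do.

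The gap in your argument is precisely that drop. Under the standing assumption $\neymanPolicy\le \tfrac12$ one has $\sqrtSecondMoment{1}\,\policy_{\min}\le \sqrtSecondMoment{0}(1-\policy_{\min})$, which forces $\concentrationConstantOne_1(\policy_{\min},\timeIndex)\ge \concentrationConstantOne_0(\policy_{\min},\timeIndex)$ and hence $D_\timeIndex=\combinedConcentrationConstantOne(\policy_{\min},\timeIndex)\le 0$ for all $\timeIndex$. Thus the inequality $s_\timeIndex A + D_\timeIndex \ge s_\timeIndex A$ is simply false in the regime of interest, not merely ``delicate.'' Your closing caveat that one must check $s_\timeIndex A > |D_\timeIndex|$ establishes only that the denominator is positive; it does not salvage the direction of the bound. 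A correct fix is to retain the (possibly negative) shift in the denominator and carry it through the integral comparison, which is what the paper's Lemma~\ref{lem:sqrt-integral-sum-upper} is designed to do and why the resulting bound keeps the exact constant $2/(\sqrtSecondMoment{0}+\sqrtSecondMoment{1})$.

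A smaller issue: your justification for the boundary term, ``using $\neymanPolicy\le 1$ to pass from $\neymanPolicy D_{\clippingPhaseStoppingTime}+B_{\clippingPhaseStoppingTime}$ to $D_{\clippingPhaseStoppingTime}$,'' does not hold as stated. Writing $\neymanPolicy D_{\clippingPhaseStoppingTime}+B_{\clippingPhaseStoppingTime}=\neymanPolicy\,\concentrationConstantOne_0+(1-\neymanPolicy)\,\concentrationConstantOne_1$, the inequality $\neymanPolicy\,\concentrationConstantOne_0+(1-\neymanPolicy)\,\concentrationConstantOne_1 \le \concentrationConstantOne_0-\concentrationConstantOne_1$ is not a consequence of $\neymanPolicy\le 1$ alone. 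In the paper this boundary contribution arises naturally as the first-summand correction in the integral comparison, without needing such an inequality.
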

        \begin{proof}
                    We begin by observing 
                    \begin{align}
                        \sum_{\timeIndex = \clippingPhaseStoppingTime + 1}^{\round} \underline{\policy}_{\timeIndex}
                            &= \sum_{\timeIndex = \clippingPhaseStoppingTime}^{\round - 1} \underline{\policy}_{\timeIndex + 1}\\
                            &= \sum_{\timeIndex = \clippingPhaseStoppingTime}^{\round - 1} \neymanPolicy \cdot \frac{\sqrt{\frac{\timeIndex}{\loglogTerm(\timeIndex, \sqrtSecondMomentErrorProb)}} \left( \sqrtSecondMoment{0} + \sqrtSecondMoment{1} \right)}{\sqrt{\frac{\timeIndex}{\loglogTerm(\timeIndex, \sqrtSecondMomentErrorProb)}} \left( \sqrtSecondMoment{0} + \sqrtSecondMoment{1} \right) + \combinedConcentrationConstantOne(\policy_{\min}, \timeIndex)} - 
                                \frac{\concentrationConstantOne_1(\policy_{\min}, \timeIndex)}{\sqrt{\frac{\timeIndex}{\loglogTerm(\timeIndex, \sqrtSecondMomentErrorProb)}} \left( \sqrtSecondMoment{0} + \sqrtSecondMoment{1} \right) + \combinedConcentrationConstantOne(\policy_{\min}, \timeIndex)} \\
                            &\geq \neymanPolicy \cdot \underbrace{\sum_{\timeIndex = \clippingPhaseStoppingTime}^{\round - 1}  \frac{\sqrt{\frac{\timeIndex}{\loglogTerm(\round, \sqrtSecondMomentErrorProb)}} \left( \sqrtSecondMoment{0} + \sqrtSecondMoment{1} \right)}{\sqrt{\frac{\timeIndex}{\loglogTerm(\round, \sqrtSecondMomentErrorProb)}} \left( \sqrtSecondMoment{0} + \sqrtSecondMoment{1} \right) + \combinedConcentrationConstantOne(\policy_{\min})}}_{\text{Term 1}} 
                            - \underbrace{\sum_{\timeIndex = \clippingPhaseStoppingTime}^{\round - 1} \frac{\concentrationConstantOne_1(\policy_{\min}, \clippingPhaseStoppingTime)}{\sqrt{\frac{\timeIndex}{\loglogTerm(\round, \sqrtSecondMomentErrorProb)}} \left( \sqrtSecondMoment{0} + \sqrtSecondMoment{1} \right) + \combinedConcentrationConstantOne(\policy_{\min}, \clippingPhaseStoppingTime)}}_{\text{Term 2}} ,
                    \end{align}
                    where we have set
                    \begin{equation*}
                        \combinedConcentrationConstantOne(\policy) = \sqrt{\frac{1}{\sqrtSecondMoment{0}\left(1 - \policy \right)}} - \sqrt{\frac{1}{\sqrtSecondMoment{1} \policy}}
                    \end{equation*}
                    and the inequality follows from the monotonic properties of the map $x, y \mapsto \frac{x}{x + y}$ combined with the fact that $\combinedConcentrationConstantOne(\policy, \round)$ is increasing in $\round$ and $\concentrationConstantOne_1(\policy, \round)$ is decreasing in $\round$.
                    From here, we lower bound $\text{Term 1}$ and upper bound $\text{Term 2}$.
                    
                    To lower bound $\text{Term 1}$, let $c_1 = \frac{\sqrtSecondMoment{0} + \sqrtSecondMoment{1}}{\sqrt{\loglogTerm(\round, \sqrtSecondMomentErrorProb)}}$ and $c_2 = h(\policy_{\min})$.
                    Then we have
                    \begin{align}
                        \text{Term 1} 
                            &= \sum_{\timeIndex = \clippingPhaseStoppingTime}^{\round - 1} \frac{\sqrt{\timeIndex} c_1}{\sqrt{\timeIndex} c_1 + c_2} \\
                            &= (\round - \clippingPhaseStoppingTime - 1) - \sum_{\timeIndex = \clippingPhaseStoppingTime}^{\round} \frac{c_2}{\sqrt{\timeIndex} c_1 + c_2} \\
                            &\geq (\round - \clippingPhaseStoppingTime - 1) - 2 \frac{c_2}{c_1}\sqrt{\round} \\
                            &= (\round - \clippingPhaseStoppingTime - 1) - 2 \frac{\combinedConcentrationConstantOne(\policy_{\min})}{\sqrtSecondMoment{0} + \sqrtSecondMoment{1}} \sqrt{\round \loglogTerm(\round, \sqrtSecondMomentErrorProb)}
                    \end{align}
                    where the inequality follows Lemma~\ref{lem:sqrt-integral-sum-upper}.

                    To upper bound $\text{Term 2}$ we similarly apply Lemma~\ref{lem:sqrt-integral-sum-upper} so that
                    \begin{equation}
                            \text{Term 2} \leq  \frac{\concentrationConstantOne_1(\policy_{\min}, \clippingPhaseStoppingTime)}
                            {\sqrt{\frac{\clippingPhaseStoppingTime}{\loglogTerm(\clippingPhaseStoppingTime, \sqrtSecondMomentErrorProb)}} \left( \sqrtSecondMoment{0} + \sqrtSecondMoment{1} \right) + \combinedConcentrationConstantOne(\policy_{\min}, \clippingPhaseStoppingTime)} + 2 \frac{ \concentrationConstantOne_1(\policy_{\min}, \clippingPhaseStoppingTime) \sqrt{\round \loglogTerm(\round, \sqrtSecondMomentErrorProb)}}
                            {\sqrtSecondMoment{0} + \sqrtSecondMoment{1}}
                    \end{equation}
                    Combining these bounds shows that
                    \begin{equation}
                        \sum_{\timeIndex = \clippingPhaseStoppingTime + 1}^{\round} \underline{\policy}_{\timeIndex} \geq \neymanPolicy (\round - \clippingPhaseStoppingTime - 1) - 2\sqrt{\round \loglogTerm(\round, \sqrtSecondMomentErrorProb)}\left( \frac{\combinedConcentrationConstantOne(\policyMin)  + \concentrationConstantOne_1(\policyMin, \clippingPhaseStoppingTime)}{\sqrtSecondMoment{0} + \sqrtSecondMoment{1}} \right) - \frac{\combinedConcentrationConstantOne(\policy_{\min}, \clippingPhaseStoppingTime)}
                            {\sqrt{\frac{\clippingPhaseStoppingTime}{\loglogTerm(\clippingPhaseStoppingTime, \sqrtSecondMomentErrorProb)}} \left( \sqrtSecondMoment{0} + \sqrtSecondMoment{1} \right) + \combinedConcentrationConstantOne(\policy_{\min}, \clippingPhaseStoppingTime)},
                    \end{equation}
                    thus proving the desired result.
                \end{proof}

        \begin{lemma}\label{lem:deviation-constant-bound}
            Define
            \begin{equation}
                \deviationConstant = \sqrt{\round \loglogTerm(\round, \errorProb)}
                    \left( 2\frac{ \combinedConcentrationConstantOne(\policyMin) + \concentrationConstantOne_1(\policyMin, \clippingPhaseStoppingTime)
                                }
                                {\sqrtSecondMoment{0} + \sqrtSecondMoment{1}
                                }  + 1 
                    \right) 
                    + \frac{
                                \concentrationConstantOne_1(\policyMin, \clippingPhaseStoppingTime)
                            }
                            {
                                \sqrt{\frac{\clippingPhaseStoppingTime}{\loglogTerm(\clippingPhaseStoppingTime, \sqrtSecondMomentErrorProb)}} 
                                \left( \sqrtSecondMoment{0} + \sqrtSecondMoment{1} \right) 
                                + \combinedConcentrationConstantOne(\policyMin, \clippingPhaseStoppingTime)
                            }
                    + \neymanPolicy\left( \clippingPhaseStoppingTime - 1 \right)
                    -\frac{\clippingPhaseStoppingTime^{1 - \clippingExponent} - 1}{2\left( 1 - \clippingExponent\right)}
            \end{equation}
            If $\round \geq 6 \clippingPhaseStoppingTime$, then $\frac{\deviationConstant}{\round} \leq \frac{1}{2}\neymanPolicy$.
        \end{lemma}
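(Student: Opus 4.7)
The plan is to bound $\deviationConstant/\round$ by handling the four summands of $\deviationConstant$ separately and distributing a total budget of $\neymanPolicy/2$ across them. Three summands are easy and one requires most of the work.

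First, the negative summand $-\frac{\clippingPhaseStoppingTime^{1-\clippingExponent} - 1}{2(1-\clippingExponent)}$ only helps the inequality and can be discarded. The linear-in-$\clippingPhaseStoppingTime$ summand gives $\frac{\neymanPolicy(\clippingPhaseStoppingTime - 1)}{\round} \leq \frac{\neymanPolicy\clippingPhaseStoppingTime}{6\clippingPhaseStoppingTime} = \frac{\neymanPolicy}{6}$ directly from the hypothesis $\round \geq 6\clippingPhaseStoppingTime$. For the isolated ``constant'' summand $\concentrationConstantOne_1(\policyMin, \clippingPhaseStoppingTime) / [\sqrt{\clippingPhaseStoppingTime/\loglogTerm(\clippingPhaseStoppingTime, \sqrtSecondMomentErrorProb)}(\sqrtSecondMoment{0}+\sqrtSecondMoment{1}) + \combinedConcentrationConstantOne(\policyMin, \clippingPhaseStoppingTime)]$, I would lower bound the denominator by its first positive piece to show this constant is at most $O\!\left(\sqrt{\loglogTerm(\clippingPhaseStoppingTime, \sqrtSecondMomentErrorProb)/(\sqrtSecondMoment{1}\policyMin\clippingPhaseStoppingTime)} / (\sqrtSecondMoment{0}+\sqrtSecondMoment{1})\right)$. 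Dividing by $\round \geq 6\clippingPhaseStoppingTime$ produces an additional $1/\clippingPhaseStoppingTime$ factor which, combined with the polynomial lower bound on $\clippingPhaseStoppingTime$ from Lemma~\ref{lem:clipping-phase-length-bound}, makes this negligible relative to $\neymanPolicy$.

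The main work is bounding the first summand divided by $\round$, namely $C_1 \sqrt{\loglogTerm(\round, \errorProb)/\round}$ where $C_1 = 1 + 2(\combinedConcentrationConstantOne(\policyMin) + \concentrationConstantOne_1(\policyMin, \clippingPhaseStoppingTime))/(\sqrtSecondMoment{0}+\sqrtSecondMoment{1})$. Since $\concentrationConstantOne_1$ and $\combinedConcentrationConstantOne$ scale as $1/\sqrt{\sqrtSecondMoment{\action}\policyMin}$ and $\policyMin \asymp \clippingPhaseStoppingTime^{-\clippingExponent}$ by Lemma~\ref{lem:bounds-on-policy}, $C_1$ grows polynomially in $\clippingPhaseStoppingTime$ on the order of $\clippingPhaseStoppingTime^{\clippingExponent/2}$. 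The key observation is that this growth is exactly what $\clippingPhaseStoppingTime$ was defined to counteract: the definition in Lemma~\ref{lem:clipping-phase-length-bound} ensures $\clippingPhaseStoppingTime^{\worstCaseClippingExponent} \gtrsim \neymanPolicy^{-1}$ plus concentration terms, so evaluated at $\round = 6\clippingPhaseStoppingTime$ we obtain $C_1\sqrt{\loglogTerm(6\clippingPhaseStoppingTime, \errorProb)/(6\clippingPhaseStoppingTime)} = O(\clippingPhaseStoppingTime^{-(1-\clippingExponent)/2}) = O(\clippingPhaseStoppingTime^{-\worstCaseClippingExponent})$, which is of order $\neymanPolicy$. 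A monotonicity argument (the ratio $\sqrt{\loglogTerm(\round, \errorProb)/\round}$ is decreasing in $\round$ up to a $\log\log$ inflation) extends this to all $\round \geq 6\clippingPhaseStoppingTime$, so this summand contributes at most $\neymanPolicy/6$.

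The main obstacle will be tracking constants carefully enough to justify the clean factor $6$ rather than a larger problem-dependent constant. In particular, the polynomial inversion used to define $\clippingPhaseStoppingTime$ in Lemma~\ref{lem:clipping-phase-length-bound} already swallows the polylogarithmic factors and the inverse powers of $\sqrtSecondMoment{\action}$, so invoking that lemma is essential to show $C_1\sqrt{\loglogTerm(\round,\errorProb)/\round}$ can be forced down to $\neymanPolicy/6$ purely via $\round \geq 6\clippingPhaseStoppingTime$. Summing $\neymanPolicy/6$ from each of the three remaining summands gives $\neymanPolicy/2$, yielding the claim.
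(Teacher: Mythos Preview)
Your proposal is correct and follows essentially the same approach as the paper: discard the negative summand, bound each of the three remaining summands by $\neymanPolicy/6$, and lean on the definition of $\clippingPhaseStoppingTime$ to absorb the polynomial growth of $C_1$ in the $\sqrt{\round\,\loglogTerm(\round,\errorProb)}$ term. The only minor difference is that for the isolated constant fraction the paper shows it is at most $2$ (using $\clippingPhaseStoppingTime^{1-\clippingExponent} \gtrsim \loglogTerm(\clippingPhaseStoppingTime,\sqrtSecondMomentErrorProb)/(\sqrtSecondMoment{1}(\sqrtSecondMoment{0}+\sqrtSecondMoment{1})^2)$) and then requires $\round \geq 12/\neymanPolicy$, whereas you bound it by a decaying function of $\clippingPhaseStoppingTime$; both routes work and are comparable in effort.
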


        \begin{proof}
            Note that it is sufficient to bound the first three terms since we are subtracting the fourth term.

            For the first term, we observe that when $\clippingPhaseStoppingTime^{1 - 2\clippingExponent} \geq 16 \loglogTerm(\clippingPhaseStoppingTime, \errorProb)$, which is satisfied by our definition of $\clippingPhaseStoppingTime$, we have that $\concentrationConstantOne_1(\policyMin, \clippingPhaseStoppingTime) \leq \sqrt{\frac{2}{\sqrtSecondMoment{1} \policyMin}}$.
            Therefore, some algebra shows that
            \begin{equation}
                2\frac{ \combinedConcentrationConstantOne(\policyMin) + \concentrationConstantOne_1(\policyMin, \clippingPhaseStoppingTime)
                                }
                                {\sqrtSecondMoment{0} + \sqrtSecondMoment{1}
                                }  + 1
                \leq 2 \clippingPhaseStoppingTime^{\frac{\clippingExponent}{2}} \left(\frac{1}{\sqrtSecondMoment{0}} + \frac{1}{\sqrtSecondMoment{1}}\right) \left( \frac{1}{\sqrt{\sqrtSecondMoment{0}}} + \frac{1}{\sqrt{\sqrtSecondMoment{1}}} \right) + 1 \definedAs c_1
            \end{equation}
            Therefore if we want bound this term by $b_1 \neymanPolicy$, we require $\round \geq \frac{c_1^2}{b_1^2} \loglogTerm(\round, \errorProb)$. 
            We can apply Lemma~\ref{lem:polynomial-loglog-inversion} to bound this.

            \noindent Next, some algebra shows that when $\clippingPhaseStoppingTime^{1 - \clippingExponent} \geq \frac{3}{\sqrtSecondMoment{1}\left(\sqrtSecondMoment{0} + \sqrtSecondMoment{1}\right)^2} \loglogTerm(\clippingPhaseStoppingTime, \errorProb)$, which is satisfied by our definition of $\clippingPhaseStoppingTime$, we have that
            \begin{equation}
                \frac{
                    \concentrationConstantOne_1(\policyMin, \clippingPhaseStoppingTime)
                }
                {
                    \sqrt{\frac{\clippingPhaseStoppingTime}{\loglogTerm(\clippingPhaseStoppingTime, \sqrtSecondMomentErrorProb)}} 
                    \left( \sqrtSecondMoment{0} + \sqrtSecondMoment{1} \right) 
                    + \combinedConcentrationConstantOne(\policyMin, \clippingPhaseStoppingTime)
                }
                \leq 2.     
            \end{equation}
            As such, if we want to bound this term by $b_2$, we require $\round \geq \frac{2}{b_2 \neymanPolicy}$

            \noindent Finally, to bound the third term by $b_3 \neymanPolicy$, we observe that we require $\round \geq \frac{\clippingPhaseStoppingTime - 1}{b_2}$.

            \noindent Setting $b_1 = b_2 = b_3 = \frac{1}{6}$ and $c_1 = \frac{1}{2}$, and using the above results, see that when
            \begin{equation}
                \round \geq \max\left\{144 \loglogTerm(\round, \errorProb), 6\left( \clippingPhaseStoppingTime - 1 \right) , \frac{12}{\neymanPolicy}\right\}
            \end{equation}
            we have that $\frac{\deviationConstant}{\round} \leq \frac{1}{2} \neymanPolicy$.
        \end{proof}

        \begin{lemma}\label{lem:second-deviation-term-bound}
            Suppose $\round \geq 6 \clippingPhaseStoppingTime$.
            Then we have that
            \begin{equation}
                \sqrt{\frac{\loglogTerm(\round, \errorProb)}{\round}}\left(\concentrationConstantTwo_1 - \concentrationConstantTwo_0\right) \leq \frac{1}{2} \left(\sqrtSecondMoment{0} + \sqrtSecondMoment{1}\right).
            \end{equation}
        \end{lemma}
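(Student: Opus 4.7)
The plan is to bound $\concentrationConstantTwo_1 - \concentrationConstantTwo_0$ from above by $\concentrationConstantTwo_1$ alone (since $\concentrationConstantTwo_0 > 0$), translate the bound on $\deviationConstant/\round$ from Lemma~\ref{lem:deviation-constant-bound} into an absolute upper bound on $\concentrationConstantTwo_1$ that depends only on the fixed problem parameters $\sqrtSecondMoment{0}, \sqrtSecondMoment{1}, \neymanPolicy$, and then verify that the remaining factor $\sqrt{\loglogTerm(\round,\errorProb)/\round}$ is small enough once $\round$ is sufficiently large. The hypothesis $\round \geq 6\clippingPhaseStoppingTime$ invokes Lemma~\ref{lem:deviation-constant-bound}, which yields $\deviationConstant/\round \leq \neymanPolicy/2$; plugging this into the definition of $\concentrationConstantTwo_1$ gives
\begin{equation}
\concentrationConstantTwo_1(\neymanPolicy,\round) = \sqrt{\frac{1}{\sqrtSecondMoment{1}(\neymanPolicy - \deviationConstant/\round)}} \leq \sqrt{\frac{2}{\sqrtSecondMoment{1} \neymanPolicy}} = \sqrt{\frac{2(\sqrtSecondMoment{0}+\sqrtSecondMoment{1})}{\secondMoment{1}}},
\end{equation}
where in the last step I substitute $\neymanPolicy = \sqrtSecondMoment{1}/(\sqrtSecondMoment{0}+\sqrtSecondMoment{1})$.

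Given this bound, the desired inequality reduces to checking
\begin{equation}
\sqrt{\frac{\loglogTerm(\round,\errorProb)}{\round}} \cdot \sqrt{\frac{2(\sqrtSecondMoment{0}+\sqrtSecondMoment{1})}{\secondMoment{1}}} \leq \frac{1}{2}(\sqrtSecondMoment{0}+\sqrtSecondMoment{1}),
\end{equation}
which after squaring and rearranging becomes $\round \geq \frac{8\,\loglogTerm(\round,\errorProb)}{\secondMoment{1}(\sqrtSecondMoment{0}+\sqrtSecondMoment{1})}$. This is an inequality of the type handled by Lemma~\ref{lem:polynomial-loglog-inversion} (polynomial-loglog inversion) and is already absorbed into the definition of $\clippingPhaseStoppingTime$, since that quantity contains precisely terms of order $1/\secondMoment{1}$ and $1/(\sqrtSecondMoment{0}+\sqrtSecondMoment{1})$ multiplied by $\log\log$ factors.

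The only real obstacle here is bookkeeping: one must verify that the threshold produced by the polynomial-loglog inversion is dominated by the constant $6\clippingPhaseStoppingTime$ appearing in the hypothesis. Since $\clippingPhaseStoppingTime$ was constructed in Lemma~\ref{lem:clipping-phase-length-bound} to absorb strictly larger polynomial terms in $1/\neymanPolicy$ and $1/\sqrtSecondMoment{\action}$ (in fact with exponent $1/(2\worstCaseClippingExponent) \geq 3/2$ when $\clippingExponent = 1/3$), the requirement $\round \geq 6\clippingPhaseStoppingTime$ is more than sufficient; a direct comparison of constants in the definitions closes the argument. No new concentration or algebraic inequality is needed beyond what is already developed in the preceding lemmas.
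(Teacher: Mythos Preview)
Your proposal is correct and follows essentially the same approach as the paper's own proof: drop the nonnegative $\concentrationConstantTwo_0$, invoke Lemma~\ref{lem:deviation-constant-bound} under $\round \geq 6\clippingPhaseStoppingTime$ to obtain $\concentrationConstantTwo_1^2 \leq 2/(\sqrtSecondMoment{1}\neymanPolicy)$, rearrange into a $\round \geq c\,\loglogTerm(\round,\errorProb)$ condition, and close with Lemma~\ref{lem:polynomial-loglog-inversion}. Your write-up is in fact slightly more explicit than the paper's about the constants and the final comparison with $\clippingPhaseStoppingTime$.
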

        \begin{proof}
            To being, we see that it is sufficient to find compute an upper bound on the smallest $\round$ such that
            \begin{equation*}
                \round \geq \loglogTerm(\round, \errorProb)\frac{\concentrationConstantTwo_1^2}{\sqrtSecondMoment{0} + \sqrtSecondMoment{1}}.
            \end{equation*}
            Next, we apply Lemma~\ref{lem:deviation-constant-bound} which shows that when $\round \geq 6 \clippingPhaseStoppingTime$, we have that $\concentrationConstantTwo_1^2 \leq \frac{2}{\sqrtSecondMoment{1} \neymanPolicy}$.
            Plugging this in and applying Lemma~\ref{lem:polynomial-loglog-inversion} gives the desired result.
        \end{proof}
    \subsection{Useful Tools}
    \begin{lemma}
        We have that
    \begin{align}
        &\frac{
            \sqrtSecondMoment{1} - \sqrt{\frac{\loglogTerm(\round, \errorProb)}{\sqrtSecondMoment{1} \actionCount{\round}{1}}}
        }{
            \sqrtSecondMoment{0} 
            + \sqrt{\frac{\loglogTerm(\round, \errorProb)}{\sqrtSecondMoment{0} \left(\round -  \actionCount{\round}{1} \right)}} 
            + \sqrtSecondMoment{1} 
            - \sqrt{\frac{\loglogTerm(\round, \errorProb)}{\sqrtSecondMoment{1} \actionCount{\round}{1}}}
        } \\
        &= \neymanPolicy 
        \frac{
            \sqrt{\frac{\round}{\loglogTerm(\round, \errorProb)}} 
            \left( \sqrtSecondMoment{0} + \sqrtSecondMoment{1} \right)
        }{
            \sqrt{\frac{\round}{\loglogTerm(\round, \errorProb)}} 
            \left( \sqrtSecondMoment{0} + \sqrtSecondMoment{1} \right) 
            + \concentrationConstantOne_0(\round, \actionCount{\round}{1}) 
            - \concentrationConstantOne_1(\round, \actionCount{\round}{1})
        } \\
        &\quad 
        - \frac{
            \concentrationConstantOne_1(\round, \actionCount{\round}{1})
        }{
            \sqrt{\frac{\round}{\loglogTerm(\round, \errorProb)}} 
            \left( \sqrtSecondMoment{0} + \sqrtSecondMoment{1} \right) 
            + \concentrationConstantOne_0(\round, \actionCount{\round}{1}) 
            - \concentrationConstantOne_1(\round, \actionCount{\round}{1})
        }
    \end{align}
    where 
    \begin{align}
        \concentrationConstantOne_0(\round, \countIndex) &= \sqrt{\frac{1}{\sqrtSecondMoment{0}\left( 1 - \frac{\countIndex}{\round} \right)}}, \\
        \concentrationConstantOne_0(\round, \countIndex) &= \sqrt{\frac{1}{\sqrtSecondMoment{1} \cdot \frac{\countIndex}{\round}}}. 
    \end{align}
    \end{lemma}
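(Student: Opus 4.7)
This final claim is a purely algebraic identity with no probabilistic content; its role is to repackage the concentration-interval lower bound on $\unclippedPolicy_{\round+1}$ so that $\neymanPolicy$ is exposed as the leading term plus a vanishing correction. My plan is a three-step rewriting, all entirely elementary.

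\emph{Step 1: factor out the common $\sqrt{\loglogTerm(\round,\errorProb)/\round}$.} I would observe that every square root on the LHS has the form $\sqrt{\loglogTerm(\round,\errorProb)/(\sqrtSecondMoment{a}\,n)} = \sqrt{\loglogTerm(\round,\errorProb)/\round}\cdot\sqrt{\round/(\sqrtSecondMoment{a}n)}$. Applied with $(a,n)=(1,\actionCount{\round}{1})$ this yields $\sqrt{\loglogTerm(\round,\errorProb)/\round}\cdot\concentrationConstantOne_1(\round,\actionCount{\round}{1})$, and with $(a,n)=(0,\round-\actionCount{\round}{1})$ it yields $\sqrt{\loglogTerm(\round,\errorProb)/\round}\cdot\concentrationConstantOne_0(\round,\actionCount{\round}{1})$, by the definitions stated in the lemma. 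Thus the LHS equals
\begin{equation*}
\frac{\sqrtSecondMoment{1} - \sqrt{\loglogTerm(\round,\errorProb)/\round}\,\concentrationConstantOne_1(\round,\actionCount{\round}{1})}{\sqrtSecondMoment{0}+\sqrtSecondMoment{1} + \sqrt{\loglogTerm(\round,\errorProb)/\round}\bigl(\concentrationConstantOne_0(\round,\actionCount{\round}{1}) - \concentrationConstantOne_1(\round,\actionCount{\round}{1})\bigr)}.
\end{equation*}

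\emph{Step 2: rescale, then use the Neyman identity and split.} Multiplying numerator and denominator by $\sqrt{\round/\loglogTerm(\round,\errorProb)}$ clears the prefactors, yielding a fraction with numerator $\sqrt{\round/\loglogTerm(\round,\errorProb)}\,\sqrtSecondMoment{1} - \concentrationConstantOne_1$ and denominator $\sqrt{\round/\loglogTerm(\round,\errorProb)}(\sqrtSecondMoment{0}+\sqrtSecondMoment{1}) + (\concentrationConstantOne_0 - \concentrationConstantOne_1)$. I would then use the defining identity $\sqrtSecondMoment{1} = \neymanPolicy(\sqrtSecondMoment{0}+\sqrtSecondMoment{1})$ so the numerator becomes $\neymanPolicy\sqrt{\round/\loglogTerm(\round,\errorProb)}(\sqrtSecondMoment{0}+\sqrtSecondMoment{1}) + (-\concentrationConstantOne_1)$, and split this single fraction over the common denominator into the two fractions appearing on the RHS.

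I do not anticipate any genuine obstacle; the only care required is sign bookkeeping, since $\concentrationConstantOne_1$ enters with a minus sign in both the original numerator and inside the $\concentrationConstantOne_0 - \concentrationConstantOne_1$ combination in the denominator, and one must track those consistently through the rescaling. Everything else is routine manipulation of a single rational expression.
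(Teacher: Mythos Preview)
Your proposal is correct. The paper actually states this lemma without proof (it is placed under ``Useful Tools'' as a bare algebraic identity), so there is no argument to compare against; your three-step rewrite---factor out $\sqrt{\loglogTerm(\round,\errorProb)/\round}$, rescale by its reciprocal, then use $\sqrtSecondMoment{1}=\neymanPolicy(\sqrtSecondMoment{0}+\sqrtSecondMoment{1})$ and split the numerator---is exactly the intended elementary manipulation and goes through without issue.
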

    \subsection{Concentration Results}\label{sec:concentration}
            \begin{lemma}\label{lem:concentration-bernoulli-counts}
                Let $\randomVariable_1, \randomVariable_2, \ldots$ be a sequence of random variables such that $\randomVariable_\round \sim \Bernoulli(\policy_\round)$ where $\policy_\round$ is $\filtration_{\round - 1}$ measurable and define $\actionCount{\round}{} = \sum_{\timeIndex = 1}^{\round} \randomVariable_{\timeIndex}$.
                Then, with probability at least $1 - \errorProb$, the following holds for all $\round \in \bbN$
                \begin{equation}
                    \left \lvert \actionCount{\round}{} - \sum_{\timeIndex = 1}^{\round} \policy_\round \right\rvert \leq \countConfidenceWidth(\round, \errorProb),
                \end{equation}
                where 
                \begin{equation}\label{eq:count-confidence-width}
                    \countConfidenceWidth(\round, \errorProb) = 0.85 \sqrt{\round \left( \log \log \round + 0.72 \log \left( \frac{5.2}{\errorProb} \right) \right)}.
                \end{equation}
            \end{lemma}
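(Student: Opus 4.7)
The plan is to identify the centered sums as a bounded martingale and invoke a time-uniform (law-of-the-iterated-logarithm style) concentration inequality off the shelf. Set $M_\round = \sum_{\timeIndex=1}^{\round} (\randomVariable_\timeIndex - \policy_\timeIndex)$. Because $\policy_\timeIndex$ is $\filtration_{\timeIndex-1}$-measurable and $\randomVariable_\timeIndex \mid \filtration_{\timeIndex-1} \sim \Bernoulli(\policy_\timeIndex)$, we have $\expectation[\randomVariable_\timeIndex - \policy_\timeIndex \mid \filtration_{\timeIndex-1}] = 0$, so $(M_\round)$ is an $(\filtration_\round)$-martingale with increments bounded in absolute value by $1$. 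Moreover the cumulative conditional variance satisfies $\sum_{\timeIndex=1}^{\round} \expectation[(\randomVariable_\timeIndex - \policy_\timeIndex)^2 \mid \filtration_{\timeIndex-1}] = \sum_{\timeIndex=1}^{\round} \policy_\timeIndex(1-\policy_\timeIndex) \leq \round/4$.

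Given this structure, the bound in the lemma --- whose shape is $\sqrt{\round(\log\log \round + \log(1/\errorProb))}$ --- is exactly the fingerprint of a polynomial-stitching line-crossing inequality for sub-$\psi$ martingales. The specific numerical constants $0.85$, $0.72$, and $5.2$ match those obtained by Howard, Ramdas, McAuliffe, and Sekhon in their time-uniform Chernoff bounds for bounded increments, so I would cite their result directly, applied with $\randomVariable_\timeIndex - \policy_\timeIndex$ as a sub-Gaussian martingale difference (using the deterministic variance proxy $\round/4$). The $\log\log \round$ arises because invoking a Chernoff bound at every $\round$ requires union-bounding over a geometric grid of time scales, and stitching these bounds together yields the iterated logarithm rate without extra polynomial factors.

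The main obstacle is purely bookkeeping: matching the hypotheses of the cited supermartingale bound (symmetric $[0,1]$-bounded martingale differences with variance process $\leq \round/4$) to the precise constants in $\countConfidenceWidth(\round,\errorProb)$. No new probabilistic content is required. A fully self-contained alternative would combine Azuma-Hoeffding with a geometric peeling $\round \in [2^k, 2^{k+1})$ and allocate $\errorProb_k \propto \errorProb/k^2$ across scales; summing $\errorProb_k$ yields a valid uniform bound of the same $\sqrt{\round \log\log \round}$ shape, but with worse absolute constants. Either route is routine; the proof reduces to citing and verifying applicability of the martingale LIL inequality rather than proving it from scratch.
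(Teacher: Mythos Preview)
Your proposal is correct and matches the paper's approach almost exactly: the paper also constructs the exponential supermartingale $\exp\bigl(\lambda(\randomVariable_\timeIndex-\policy_\timeIndex)-\lambda^2/8\bigr)$ from Hoeffding's lemma (equivalently, your observation that bounded $[0,1]$ increments are sub-Gaussian with parameter $1/4$) and then invokes Theorem~1/equation~(11) of Howard, Ramdas, McAuliffe, and Sekhon to obtain the stated constants. Your additional remark about a self-contained Azuma--Hoeffding plus geometric peeling alternative is a nice aside but unnecessary here.
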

            \begin{proof}
                Define $M_\round^\lambda = \exp\left( \lambda(\randomVariable - p_\round) - \frac{\lambda^2}{8} \right)$.
                Note that by definition, $\randomVariable_\round \in [0, 1]$ almost surely with $\bbE\left[ \randomVariable_\round \mid \filtration_{\round - 1} \right] = p_\round$ which implies that the following holds for every $\lambda \in \mathbb R$
                \begin{equation}
                    \bbE\left[ M_\round^\lambda \mid \filtration_{\round - 1}\right] \leq 1.
                \end{equation}
                Therefore, $D_\round^\lambda = \prod_{\timeIndex = 1}^{\round} M_{\timeIndex}^\lambda$ is a test supermartingale and we can apply Theorem~1 from \cite{Howard2018TimeuniformNN} (see equation~(11)) to obtain the desired result.
            \end{proof} 
        
            \begin{lemma}\label{lem:sqrt-second-moment-concentration}
                Let $\randomVariable_1, \randomVariable_2, \ldots$ be a sequence of random variables such that $\randomVariable_\round \in [0, 1]$, $\mu = \bbE\left[ \randomVariable_\round \mid \filtration_{\round - 1} \right]$, and $ \secondMoment{} = \bbE\left[ X_\round^2 \mid \filtration_{\round - 1} \right]$.
                Define the empirical second moment as $\empiricalSecondMoment{\round}{} = \frac{1}{\round}\sum_{\timeIndex = 1}^{\round} \randomVariable_\timeIndex^{2}$.
                Then, with probability at least $1 - \errorProb$, the following holds for all $\round \in \bbN$
                \begin{equation}
                    \left\lvert \empiricalSqrtSecondMoment{\round}{} - \sqrtSecondMoment{} \right\rvert \leq \sqrtSecondMomentConfidenceWidth(\round, \errorProb)
                \end{equation}
                where
                \begin{equation}\label{eq:second-moment-confidence-width}
                    \sqrtSecondMomentConfidenceWidth(\round, \errorProb) = 0.85 \sqrt{\frac{ \left( \log \log \round + 0.72 \log \left( \frac{5.2}{\errorProb} \right) \right)}{\secondMoment{} \cdot \round}}.
                \end{equation}
            \end{lemma}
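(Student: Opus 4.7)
\medskip

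\noindent\textbf{Proof proposal.} The plan is to reduce this to a time-uniform concentration result on $\empiricalSecondMoment{\round}{}$ (rather than on its square root), and then convert via the elementary identity $\sqrt{a} - \sqrt{b} = (a-b)/(\sqrt{a} + \sqrt{b})$. The key observation is that the squared observations $\randomVariable_\timeIndex^{2}$ are themselves bounded in $[0,1]$ almost surely and form a conditionally centred sequence, so exactly the same line-of-attack used to prove Lemma~\ref{lem:concentration-bernoulli-counts} applies.

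\medskip

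\noindent First I would set $\randomVariable'_\timeIndex = \randomVariable_\timeIndex^{2}$ and $Z_\timeIndex = \randomVariable'_\timeIndex - \secondMoment{}$, so that $(Z_\timeIndex)$ is a martingale difference sequence adapted to $(\filtration_\timeIndex)$ with $Z_\timeIndex \in [-1,1]$ and $\bbE[Z_\timeIndex \mid \filtration_{\timeIndex-1}] = 0$. By Hoeffding's lemma, for every $\lambda \in \bbR$,
\begin{equation*}
    \bbE\!\left[\exp\!\left(\lambda Z_\timeIndex - \tfrac{\lambda^{2}}{8}\right) \,\middle|\, \filtration_{\timeIndex-1}\right] \leq 1,
\end{equation*}
so that $D_\round^{\lambda} = \prod_{\timeIndex=1}^{\round} \exp\!\left(\lambda Z_\timeIndex - \tfrac{\lambda^{2}}{8}\right)$ is a nonnegative test supermartingale. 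Invoking Theorem~1 of \citet{Howard2018TimeuniformNN} (precisely as in the proof of Lemma~\ref{lem:concentration-bernoulli-counts}) then yields the time-uniform inequality
\begin{equation*}
    \left\lvert \round \cdot \empiricalSecondMoment{\round}{} - \round \cdot \secondMoment{} \right\rvert \;\leq\; 0.85\sqrt{\round\left(\log\log\round + 0.72\log\!\left(\tfrac{5.2}{\errorProb}\right)\right)}
\end{equation*}
simultaneously for all $\round \in \bbN$ on an event of probability at least $1-\errorProb$. Dividing through by $\round$ gives the corresponding bound on $|\empiricalSecondMoment{\round}{} - \secondMoment{}|$.

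\medskip

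\noindent Finally I would pass from the second moment to its square root using
\begin{equation*}
    \left\lvert \empiricalSqrtSecondMoment{\round}{} - \sqrtSecondMoment{}\right\rvert
    \;=\; \frac{\lvert \empiricalSecondMoment{\round}{} - \secondMoment{} \rvert}{\empiricalSqrtSecondMoment{\round}{} + \sqrtSecondMoment{}}
    \;\leq\; \frac{\lvert \empiricalSecondMoment{\round}{} - \secondMoment{} \rvert}{\sqrtSecondMoment{}},
\end{equation*}
where the final step just drops the nonnegative term $\empiricalSqrtSecondMoment{\round}{}$ in the denominator. Substituting the time-uniform bound on the numerator produces exactly the stated width $\sqrtSecondMomentConfidenceWidth(\round, \errorProb)$ and preserves the uniformity in $\round$.

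\medskip

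\noindent There is no real obstacle: the construction of the test supermartingale mirrors the proof of Lemma~\ref{lem:concentration-bernoulli-counts} verbatim, and the reduction to the square-root scale is a one-line algebraic manipulation. The only point worth flagging is that this reduction is the source of the $\sqrtSecondMoment{}$ appearing in the denominator of $\sqrtSecondMomentConfidenceWidth$, which matches the $1/\sqrtSecondMoment{\action}$ factors arising in Lemmas~\ref{lem:clipping-phase-length-bound} and~\ref{lem:concentration-phase-policy-bound}; as noted in Remark~\ref{rem:second-moment-clt-argument}, a CLT calculation shows that this scaling is asymptotically unavoidable, so no sharper bound is available from this elementary route.
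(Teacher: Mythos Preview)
Your proposal is correct and follows essentially the same approach as the paper: the paper likewise applies the identity $|\empiricalSqrtSecondMoment{\round}{} - \sqrtSecondMoment{}| = |\empiricalSecondMoment{\round}{} - \secondMoment{}|/(\empiricalSqrtSecondMoment{\round}{} + \sqrtSecondMoment{}) \leq |\empiricalSecondMoment{\round}{} - \secondMoment{}|/\sqrtSecondMoment{}$ and then bounds the numerator via Theorem~1 of \citet{Howard2018TimeuniformNN}. Your write-up is simply more explicit about the test-supermartingale construction (mirroring Lemma~\ref{lem:concentration-bernoulli-counts}), which the paper leaves implicit.
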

            \begin{proof}
            To see this, we first observe that
            \begin{equation*}
                \left\lvert \empiricalSqrtSecondMoment{\round}{} - \sqrtSecondMoment{} \right\rvert = \frac{\left\lvert \empiricalSecondMoment{\round}{} - \secondMoment{} \right\rvert}{\left\lvert \empiricalSqrtSecondMoment{\round}{} + \sqrtSecondMoment{} \right\rvert} \leq \frac{\left\lvert \empiricalSecondMoment{\round}{} - \secondMoment{} \right\rvert}{\left\lvert \sqrt{\secondMoment{}} \right\rvert}.
            \end{equation*}
            The result then follows by bounding $\left\lvert \empiricalSecondMoment{\round}{} - \secondMoment{} \right\rvert$ by applying Theorem~1 from \cite{Howard2018TimeuniformNN} (see equation~(11)).
            \end{proof}

            \begin{remark}\label{rem:second-moment-clt-argument}
                Note that in our above result, the width of the confidence sequences scale like $O \left( \frac{1}{\sqrt{\secondMoment{}}\cdot \round} \right)$.
                An application of the CLT along with the Delta Method shows that, asymptotically, the scaling with respect to $\frac{1}{\sqrt{\secondMoment{}}}$ is unavoidable.
            \end{remark}
    
    \subsection{Technical Results}
            \begin{lemma}\label{lem:count-interval-optimization}
                Let $\round, \alpha_0, \alpha_1, \gamma_0, \gamma_1 > 0$ be fixed, and define the function $f:(0, \round) \rightarrow \bbR$ by
                \begin{equation}
                    f(x) = \frac{\alpha_1 - \frac{\gamma_1}{\sqrt{x}}}{\alpha_0 + \frac{\gamma_0}{\sqrt{\round - x}} + \alpha_1 - \frac{\gamma_1}{\sqrt{x}}}.
                \end{equation}
                Given an interval $[s, r] \subseteq [1, \round]$, any solution $x^\star$ to the optimization problem
                    \begin{equation}
                    \min_{x \in [s, r]} f(x),
                    \end{equation}
                must satisfy $x^\star \in \{s, r\}$.
            \end{lemma}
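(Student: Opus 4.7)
My plan is to compute $f'(x)$ explicitly, reduce the condition $f'(x) = 0$ to an equation of the form $L(x) = R(x)$ where $L$ is strictly decreasing and $R$ is strictly increasing on the relevant range (so there is at most one interior critical point), and then establish by a sign check at the boundaries of the domain that this critical point is necessarily a local \emph{maximum} of $f$. By continuity on the compact interval $[s,r]$, the minimum must then be attained at an endpoint.

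Writing $f(x) = a(x)/(a(x)+b(x))$ with $a(x) = \alpha_1 - \gamma_1 x^{-1/2}$ and $b(x) = \alpha_0 + \gamma_0 (\round - x)^{-1/2}$, the quotient rule gives
\begin{equation*}
f'(x) = \frac{a'(x) b(x) - a(x) b'(x)}{(a(x)+b(x))^{2}}, \qquad a'(x) = \frac{\gamma_1}{2 x^{3/2}}, \quad b'(x) = \frac{\gamma_0}{2 (\round - x)^{3/2}}.
\end{equation*}
Setting $a'b = ab'$ and clearing denominators reduces the stationarity condition to
\begin{equation*}
\gamma_1 (\round - x)\bigl(\alpha_0 \sqrt{\round - x} + \gamma_0\bigr) \;=\; \gamma_0 x \bigl(\alpha_1 \sqrt{x} - \gamma_1\bigr).
\end{equation*}

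Next I would verify the monotonicity of each side. The left-hand side $L(x)$ is a sum of two strictly decreasing positive terms in $x$, hence strictly decreasing on $(0,\round)$. The right-hand side $R(x) = \gamma_0 \alpha_1 x^{3/2} - \gamma_0 \gamma_1 x$ has derivative $\gamma_0\bigl(\tfrac{3}{2}\alpha_1 \sqrt{x} - \gamma_1\bigr)$, which is positive whenever $\alpha_1 \sqrt{x} > \tfrac{2}{3}\gamma_1$, a condition weaker than $a(x) > 0$. Since we work on the region where $f$ is the ratio defining a meaningful probability (so $a(x) > 0$), we have $L$ strictly decreasing and $R$ strictly increasing there, and therefore $f'(x)=0$ has at most one interior root $x_0$.

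To conclude that any such $x_0$ is a local maximum, I would do a boundary sign check. Just above the zero of $a$ (where $a \approx 0$ but $a' > 0$), we have $a'b - ab' \approx a'b > 0$, so $f' > 0$; as $x \to \round^{-}$, $b'$ blows up like $(\round-x)^{-3/2}$ while $b$ only grows like $(\round-x)^{-1/2}$, so $ab' \gg a'b$ and $f' < 0$. Combined with the uniqueness of $x_0$, this forces $f$ to be unimodal with a single interior maximum, hence $f$ admits no interior local minimum. Therefore $\min_{x \in [s,r]} f(x)$ is attained at $x \in \{s, r\}$, as claimed. The main technical nuisance I anticipate is bookkeeping the monotonicity of $R$ and confirming that the application context in the preceding lemmas always places $[s,r]$ within the region where $a(x) > 0$ so that the sign analysis is valid; otherwise one must separately rule out any discontinuity of $f$ arising from $a(x) + b(x) = 0$.
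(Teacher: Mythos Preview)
Your overall strategy matches the paper's: compute $f'$, show its numerator changes sign at most once (from positive to negative), and invoke a concave-unimodality argument to force the minimum to an endpoint. The gap is in how you establish the single sign change. You split the stationarity condition into $L(x)=R(x)$ and argue that $L$ is decreasing while $R$ is increasing; but $R'(x)=\gamma_0\bigl(\tfrac{3}{2}\alpha_1\sqrt{x}-\gamma_1\bigr)$ is only positive where $\alpha_1\sqrt{x}>\tfrac{2}{3}\gamma_1$, a condition the lemma does not assume. You acknowledge this yourself as a ``nuisance'' requiring $a(x)>0$, which is extraneous to the statement and leaves the lemma unproven on part of its stated domain. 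Your boundary sign check near $x\to t^{-}$ likewise silently uses $a(t)>0$.

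The paper avoids this cleanly by differentiating $L-R$ (equivalently, the numerator of $f'$) directly rather than each piece. The linear terms cancel: $\gamma_1\gamma_0(t-x)$ from $L$ plus $\gamma_0\gamma_1 x$ from $-R$ gives the constant $\gamma_0\gamma_1 t$, so $L(x)-R(x)=\gamma_0\gamma_1 t+\alpha_0\gamma_1(t-x)^{3/2}-\alpha_1\gamma_0\,x^{3/2}$, whose derivative $-\tfrac{3}{2}\alpha_0\gamma_1\sqrt{t-x}-\tfrac{3}{2}\alpha_1\gamma_0\sqrt{x}$ is strictly negative on all of $(0,t)$. Hence the numerator of $f'$ is strictly decreasing globally, one of the three cases of the auxiliary unimodality lemma is automatically satisfied, and neither your boundary sign checks nor any restriction to $a(x)>0$ is needed.
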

            \begin{proof}
                Our proof will proceed by demonstrating that one of the preconditions of Lemma~\ref{lem:single-root-unimodal} is satisfied, from which the desired result naturally follows.
                To begin, we let $f'(x) = \frac{d}{dx} f(x)$ denote the derivative of $f(x)$.
                We compute $f'(x)$ and perform some simplifications to show that
                \begin{align}
                    f'(x) 
                    &= -\left(\frac{\left(\frac{\gamma_0}{2 (t - x)^{3/2}} + \frac{\gamma_1}{2 x^{3/2}}\right) (\alpha_1 - \frac{\gamma_1}{\sqrt{x}})}{(\alpha_0 + \alpha_1 + \frac{\gamma_0}{\sqrt{t - x}} - \frac{\gamma_1}{\sqrt{x}})^2}\right) + \frac{\gamma_1}{2 (\alpha_0 + \alpha_1 + \frac{\gamma_0}{\sqrt{t - x}} - \frac{\gamma_1}{\sqrt{x}}) x^{3/2}} \nonumber \\
                    &= \frac{(\gamma_0 \gamma_1 t + \alpha_0 \gamma_1 t \sqrt{t - x} - \alpha_0 \gamma_1 \sqrt{t - x} x - \alpha_1 \gamma_0 x^{3/2})}{2 (-\gamma_1 \sqrt{t - x} + \gamma_0 \sqrt{x} + \alpha_0 \sqrt{t - x} \sqrt{x} + \alpha_1 \sqrt{t - x} \sqrt{x})^2 \sqrt{t - x} \sqrt{x}} \label{eq:numerator-and-denominator}.
                \end{align}
                Observe that the denominator in~\eqref{eq:numerator-and-denominator} is always greater than zero.
                Therefore, $\sign(f'(x))$ is determined by the numerator which we will now show to be strictly decreasing.
                The derivative of the numerator in~\eqref{eq:numerator-and-denominator} is
                \begin{equation*}
                    -\left(\frac{3 (\alpha_0 \gamma_1 t + \alpha_1 \gamma_0 \sqrt{t - x} \sqrt{x} - \alpha_0 \gamma_1 x)}{2 \sqrt{t - x}}\right).
                \end{equation*}
                From here, we have that by assumption $\alpha_0, \alpha_1, \gamma_0, \gamma_1>0$ and $x < \round$ imply that the above quantity is strictly negative.
                Since the derivative of the numerator is strictly negative, we know that the numerator is strictly decreasing.
                Therefore, our earlier observation, in conjunction with this fact implies that one of the preconditions of Lemma~\ref{lem:single-root-unimodal} must hold, thus enabling its application, which in turn implies the desired result.
            \end{proof}

            The next lemma essentially shows that the minimum of a concave-unimodal function over a closed interval must occur at one of the boundaries of the interval.
            \begin{lemma}\label{lem:single-root-unimodal}
                Let $f:\mathcal D \rightarrow \mathbb R$ be any differential function such that its derivative, $f'$, satisfies one of the following conditions:
                \begin{enumerate}
                    \item $f'(x)> 0$  for all $x\in\mathcal D$
                    \item $f'(x) < 0$  for all $x\in\mathcal D$
                    \item There exists $c$ such that for all $x < c$, $f'(x) > 0$ and for all $x > c$, $f'(x) < c$.
                \end{enumerate}
                Then for any $[a, b] \subset \mathcal D$, any solution $x^\star$ to optimization problem,
                \begin{equation}
                    \min_{x \in [a, b]} f(x),
                \end{equation}
                must satisfy $x^\star \in \{a, b\}$.
            \end{lemma}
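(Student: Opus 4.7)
\medskip

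The plan is to handle the three cases separately, each being a direct consequence of the Mean Value Theorem. In Case~1, for any $x \in (a,b]$ the MVT gives $\xi \in (a,x)$ with $f(x) - f(a) = f'(\xi)(x - a) > 0$, so $f(x) > f(a)$ for all $x \ne a$ in $[a,b]$, and the minimizer must equal $a$. Case~2 is symmetric: for any $x \in [a, b)$ one obtains $f(x) - f(b) = f'(\xi)(x - b) > 0$ since $x - b < 0$ and $f'(\xi) < 0$, so the minimizer must equal $b$.

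For Case~3, I would split along the threshold $c$ and consider three subcases depending on where $[a,b]$ sits relative to $c$. If $b \le c$, then $f' > 0$ throughout $[a,b]$ and Case~1 applies, giving $x^\star = a$. If $a \ge c$, then $f' < 0$ throughout $[a,b]$ and Case~2 applies, giving $x^\star = b$. In the remaining subcase $a < c < b$, Case~1 applied to $[a, c]$ shows that $f$ is strictly increasing there (so $f(a) < f(x)$ for $x \in (a, c]$), while Case~2 applied to $[c, b]$ shows that $f$ is strictly decreasing there (so $f(b) < f(x)$ for $x \in [c, b)$). Together these imply that every interior point $x \in (a,b)$ satisfies either $f(x) > f(a)$ or $f(x) > f(b)$, so the minimum over $[a,b]$ must be attained at $a$ or $b$.

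There is essentially no main obstacle here; the only mild subtlety is being careful in Case~3 when $x = c$, since the hypothesis as stated only asserts $f'(x) > 0$ for $x < c$ and $f'(x) < 0$ for $x > c$, leaving the value of $f'(c)$ unspecified. This is harmless because one can still apply the MVT on the closed subintervals $[a, c]$ and $[c, b]$: differentiability of $f$ on $\mathcal{D}$ guarantees continuity at $c$, and the sign of $f'$ on the open subintervals is all that the MVT argument requires.
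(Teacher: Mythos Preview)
Your proof is correct and follows essentially the same line as the paper's: both arguments translate the sign of $f'$ into strict monotonicity on the relevant subintervals and conclude that no interior point can be a minimizer. The only cosmetic difference is that the paper phrases the third case as a contradiction using $f(\tilde x)-f(a)=\int_a^{\tilde x} f'(t)\,dt>0$ (and the analogous integral on the other side), whereas you invoke the Mean Value Theorem; these are interchangeable here, and your explicit handling of the boundary point $c$ is, if anything, slightly more careful than the paper's.
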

            \begin{proof}
                If $f'(x) > 0$ for all $x \in \mathcal D$, the function is monotonically increasing and the minimum will occur at $ x^\star = a$.
                If $f'(x) < 0$ for all $x \in \mathcal D$, the function is monotonically decreasing and the minimum will occur at $x^\star = b$.
                For the final case, let $c$ be as defined in the condition and let $\tilde x$ denote the minimum of $f$.
                If $a < \tilde x < c$ then $f(\tilde x) - f(a) = \int_{a}^{\tilde x} f'(t) dt > 0$ which is a contradiction.
                Similarly if $b > \tilde x > c$, then $f(b) - f(\tilde x) = \int_{\tilde x}^{b} f'(c) dc < 0$ which is also a contradiction.
                Therefore, for each of the cases, $x^\star$ must satisfy $x^\star \in \{a, b\}.$
            \end{proof}

            \begin{lemma}\label{lem:polynomial-loglog-inversion}
                Let $c_1, c_2, p > 0$ such that $\log c_1 > p$ and $c_1 \log c_1 > c_2$ and define
                \begin{equation}
                    \clippingPhaseStoppingTime = \min \left\{ \round : \round^{p} \geq c_1 + c_2 \log\log(\round)\right\}.
                \end{equation}
                We have that
                \begin{equation}
                    \clippingPhaseStoppingTime \leq \left( c_1 + c_2 \log\left(\log c_1 \right) \frac{\log\log c_1 - \log(p)}{\log\log c_1} \cdot \frac{c_1 \log c_1}{c_1 \log c_1 - c_2} \right)^{\frac{1}{p}}
                \end{equation}
            \end{lemma}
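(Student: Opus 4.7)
The plan is to guess a closed-form upper bound $T_0$ for $\clippingPhaseStoppingTime$ and then verify directly that $T_0^p \geq c_1 + c_2 \log\log T_0$, which by definition of the stopping time implies $\clippingPhaseStoppingTime \leq T_0$. To choose the guess, I would parametrize $T_0 = (c_1 + c_2 K)^{1/p}$ for an unknown constant $K$; with this parametrization the verification condition $T_0^p \geq c_1 + c_2 \log\log T_0$ reduces algebraically to
\begin{equation}
    K + \log p \;\geq\; \log\log\!\left(c_1 + c_2 K\right),
\end{equation}
since $\log\log T_0 = \log\!\bigl(\tfrac{1}{p}\log(c_1 + c_2 K)\bigr) = \log\log(c_1 + c_2 K) - \log p$. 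The goal then becomes to choose $K$ (depending only on $c_1$, $c_2$, $p$) as small as possible so that this inequality holds, and to match the expression stated in the lemma.

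Next, I would linearize the right-hand side using concavity of $\log$ applied twice. First, $\log(c_1 + c_2 K) \leq \log c_1 + \tfrac{c_2 K}{c_1}$, and then applying the same tangent-line bound to $\log(\cdot)$ at $\log c_1$ yields
\begin{equation}
    \log\log(c_1 + c_2 K) \;\leq\; \log\log c_1 + \frac{c_2 K}{c_1 \log c_1}.
\end{equation}
Substituting this into the verification condition gives the sufficient linear-in-$K$ inequality
\begin{equation}
    K\!\left(1 - \frac{c_2}{c_1 \log c_1}\right) \;\geq\; \log\log c_1 - \log p,
\end{equation}
and the two hypotheses $c_1 \log c_1 > c_2$ and $\log c_1 > p$ precisely ensure the coefficient on the left is positive and the right-hand side is positive, respectively, so that solving for $K$ gives
\begin{equation}
    K \;=\; \log\log c_1 \cdot \frac{\log\log c_1 - \log p}{\log\log c_1} \cdot \frac{c_1 \log c_1}{c_1 \log c_1 - c_2}.
\end{equation}
Plugging this value of $K$ into $T_0 = (c_1 + c_2 K)^{1/p}$ produces exactly the bound stated in the lemma.

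\paragraph{Where the difficulty lies.} The proof itself is short once one commits to the ansatz $T_0 = (c_1 + c_2 K)^{1/p}$; the routine part is verifying the tangent-line bounds and checking that the two hypotheses do the right job. The main obstacle is really a presentational one: selecting the right linearization so that (i) the coefficient $\frac{c_2}{c_1 \log c_1}$ appears with the factor $\frac{c_1 \log c_1}{c_1 \log c_1 - c_2}$, and (ii) the additive correction $-\log p$ appears as the factor $\frac{\log\log c_1 - \log p}{\log\log c_1}$. A looser linearization (e.g.\ bounding $\log\log(c_1 + c_2 K)$ by $\log\log(2c_1) + \log\log(2 c_2 K)$) would yield a cleaner but asymptotically weaker result, so to match the stated constant one must use the two-step concavity argument above and not introduce any slack. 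No other obstacles arise; the hypotheses $\log c_1 > p$ and $c_1 \log c_1 > c_2$ are used exactly once each, precisely to ensure the final closed-form expression for $K$ is positive and finite.
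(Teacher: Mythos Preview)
Your proposal is correct and follows essentially the same route as the paper's proof: the paper also writes the candidate as $(c_1 + a c_2 \log\log c_1)^{1/p}$ (your $K = a\log\log c_1$), applies $\log(1+x)\le x$ twice to linearize $\log\log$ of the candidate, and then solves the resulting linear inequality in $a$ to recover exactly the stated constant. The only cosmetic difference is the choice of free parameter ($K$ versus $a$); the two concavity steps and the use of the hypotheses $c_1\log c_1>c_2$ and $\log c_1>p$ are identical.
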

            \begin{proof}
                To prove this, we set
                \begin{equation*}
                    \round = \left( c_1 + ac_2 \log \log c_1 \right)^{\frac{1}{p}},
                \end{equation*}
                for some $a$ to be chosen later.
                Our objective is to show that
                \begin{equation*}
                    \log \log \left[ \left( c_1 + a c_2 \log \log c_1 \right)^\frac{1}{p} \right] \leq a \log \log c_1.
                \end{equation*}
                To do so, we observe that
                \begin{align*}
                    &\log \left( \log \left( \left( c_1 + a c_2 \log \log c_1 \right)^\frac{1}{p} \right) \right) \\
                    &= \log \left( \frac{1}{p}\log \left( c_1 + a c_2 \log \log c_1 \right) \right) \\
                    &= \log \left( \frac{1}{p}\left( \log(c_1) + \log\left( 1 + \frac{ac_2}{c_1} \log\log c_1 \right) \right)  \right) \\
                    &\leq \log \left( \frac{1}{p}\left( \log(c_1) + \frac{ac_2}{c_1} \log\log c_1 \right)  \right) \\
                    &= \log \left( \frac{1}{p} \log c_1 \right) + \log \left( 1 + \frac{a c_2}{c_1 \log c_1} \log \log c_1 \right) \\
                    &\leq \log \left( \frac{1}{p} \log c_1 \right) +  \frac{a c_2}{c_1 \log c_1} \log \log c_1, \\
                \end{align*}
                where the inequalities follow from applying the inequality $\log(1 + x) \leq x$.
                From here, we set $a$ so the final line above equals $a \log \log c_1$.
                In particular, by setting 
                \begin{equation*}
                    a = \frac{\log\log c_1 - \log(p)}{\log\log c_1} \cdot \frac{c_1 \log c_1}{c_1 \log c_1 - c_2},
                \end{equation*}
                the above series of inequalities proves that 
                \begin{equation*}
                    \log \log \left[ \left( c_1 + a c_2 \log \log c_1 \right)^\frac{1}{p} \right] \leq a \log \log c_1,
                \end{equation*}
                as desired.
            \end{proof}

            \begin{lemma}~\label{lem:fast-increase-constant-lower-bound}
                Fix $\clippingExponent, \errorProb \in (0, 1)$ and consider the function
                \begin{equation*}
                    \fastIncreaseTerm(\round, \errorProb, \clippingExponent) = 1 + \round^{-\frac{1}{2}}\sqrt{\ell(t, \errorProb)} +  \frac{\round^{-1} - \round^{- \clippingExponent}}{ 1 - \clippingExponent}.
                \end{equation*}
                For all $\round \geq \left( \frac{2}{1 - \clippingExponent} \right)^{\frac{1}{\clippingExponent}}$, we have that $g(\round, \errorProb, \clippingExponent) \geq \frac{1}{2}$.
            \end{lemma}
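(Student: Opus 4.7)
The plan is to note that the only term in $\fastIncreaseTerm(\round, \errorProb, \clippingExponent)$ which can be negative is the third one, since $1$ is a positive constant and $\round^{-1/2}\sqrt{\ell(\round, \errorProb)} \geq 0$. So it suffices to lower bound the third term by $-\tfrac{1}{2}$, equivalently, to show that
\begin{equation}
\frac{\round^{-\clippingExponent} - \round^{-1}}{1 - \clippingExponent} \;\leq\; \frac{1}{2}
\end{equation}
whenever $\round \geq \bigl(\tfrac{2}{1-\clippingExponent}\bigr)^{1/\clippingExponent}$.

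The next step is a trivial bound: since $\round \geq 1$, we have $\round^{-1} \geq 0$, so $\round^{-\clippingExponent} - \round^{-1} \leq \round^{-\clippingExponent}$. Dividing by the positive quantity $1 - \clippingExponent$ reduces the problem to showing $\round^{-\clippingExponent}/(1-\clippingExponent) \leq \tfrac{1}{2}$, i.e.\ $\round^{\clippingExponent} \geq \tfrac{2}{1-\clippingExponent}$. Raising both sides to the $1/\clippingExponent$ power (valid since $\clippingExponent > 0$) gives exactly the hypothesis $\round \geq \bigl(\tfrac{2}{1-\clippingExponent}\bigr)^{1/\clippingExponent}$.

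Putting the pieces together yields $\fastIncreaseTerm(\round, \errorProb, \clippingExponent) \geq 1 + 0 - \tfrac{1}{2} = \tfrac{1}{2}$, as claimed. There is no real obstacle here --- the lemma is essentially a one-line algebraic estimate, and the statement is written in exactly the form needed to cancel the term that blows up as $\clippingExponent \to 1$. The only care required is to discard $\round^{-1/2}\sqrt{\ell(\round, \errorProb)}$ using nonnegativity rather than trying to balance it against the negative contribution, which would introduce an unnecessary $\errorProb$-dependence into the threshold on $\round$.
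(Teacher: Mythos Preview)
Your proposal is correct and matches the paper's proof essentially line for line: the paper also drops the nonnegative terms $\round^{-1/2}\sqrt{\ell(\round,\errorProb)}$ and $\round^{-1}/(1-\clippingExponent)$ to reduce to $1 - \round^{-\clippingExponent}/(1-\clippingExponent) \geq \tfrac{1}{2}$, then solves for $\round$.
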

            \begin{proof}
                First note that
                \begin{equation*}
                    1 + \round^{-\frac{1}{2}}\sqrt{\ell(t, \errorProb)} + \frac{\round^{-1} - \round^{- \clippingExponent}}{ 1 - \clippingExponent} \geq 1 - \frac{\round^{- \clippingExponent}}{ 1 - \clippingExponent}. 
                \end{equation*}
                Solving the inequality
                \begin{equation*}
                    1 - \frac{\round^{- \clippingExponent}}{ 1 - \clippingExponent} \geq \frac{1}{2},
                \end{equation*}
                for $\round$ gives the desired result.
            \end{proof}
            \begin{corollary}\label{corr:fast-increase-constant-lower-bound-1/3}
                 For $\errorProb \in (0, 1)$, $\round \geq 27$ implies that $\fastIncreaseTerm(\round, \errorProb, \frac{1}{3}) \geq \frac{1}{2}$.
            \end{corollary}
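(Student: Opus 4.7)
The plan is essentially a one-line specialization of Lemma~\ref{lem:fast-increase-constant-lower-bound}. That lemma guarantees $\fastIncreaseTerm(\round, \errorProb, \clippingExponent) \geq \tfrac{1}{2}$ for every $\round \geq \left( \tfrac{2}{1 - \clippingExponent} \right)^{1/\clippingExponent}$, so the only work is to evaluate this threshold at the specific value $\clippingExponent = \tfrac{1}{3}$.

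I would simply compute: with $\clippingExponent = \tfrac{1}{3}$ we have $\tfrac{2}{1 - \clippingExponent} = \tfrac{2}{2/3} = 3$ and $\tfrac{1}{\clippingExponent} = 3$, so the threshold becomes $3^{3} = 27$. Invoking Lemma~\ref{lem:fast-increase-constant-lower-bound} then yields the claim for every $\errorProb \in (0,1)$ and every $\round \geq 27$.

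There is no substantive obstacle here: the corollary is genuinely a numerical instantiation rather than a new argument, and the $\errorProb$-independence is inherited from the fact that the lower bound $1 - \tfrac{\round^{-\clippingExponent}}{1-\clippingExponent}$ used inside the proof of Lemma~\ref{lem:fast-increase-constant-lower-bound} drops the $\round^{-1/2}\sqrt{\loglogTerm(\round,\errorProb)}$ term (which is nonnegative) before solving for $\round$. The only minor care needed is to observe that $27$ is an integer, so no ceiling is required and the statement holds for all integer $\round \geq 27$ directly.
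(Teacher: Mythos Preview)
Your proposal is correct and matches exactly what the paper intends: the corollary is an immediate numerical specialization of Lemma~\ref{lem:fast-increase-constant-lower-bound} at $\clippingExponent = \tfrac{1}{3}$, and you correctly evaluate the threshold $\left(\tfrac{2}{1-\clippingExponent}\right)^{1/\clippingExponent} = 3^{3} = 27$.
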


            \begin{lemma}~\label{lem:slow-increase-constant-lower-bound}
                Fix $\clippingExponent \in \left( 0, \frac{1}{2} \right)$, $\errorProb \in \left( 0, 1 \right)$, and let
                \begin{equation*}
                    \slowIncreaseTerm(t, \errorProb, \clippingExponent) = \frac{1 - \round^{\clippingExponent - 1}}{2 \left( 1 - \clippingExponent \right)} - \round^{\frac{2\clippingExponent - 1}{2}}\sqrt{\ell(\round, \errorProb)}.
                \end{equation*}
                We have that  $\slowIncreaseTerm(\round, \errorProb, \clippingExponent) \geq \frac{1}{2}$ whenever
                \begin{equation*}
                    \round \geq \left( c_1 + c_2 \log\left(\log c_1 \right) \frac{\log\log c_1 - \log(1 - 2 \clippingExponent)}{\log\log c_1} \cdot \frac{c_1 \log c_1}{c_1 \log c_1 - c_2} \right)^{\frac{1}{1 - 2 \clippingExponent}},
                \end{equation*}
                where $c_1 = \frac{2}{\clippingExponent^{2}} + \frac{8\left( 1 - \clippingExponent\right)^{2}}{\clippingExponent^{2}} \log\left( \frac{5.2}{\delta} \right)$ and $c_2 = \frac{8\left( 1 - \clippingExponent\right)^{2}}{\clippingExponent^{2}}$.
            \end{lemma}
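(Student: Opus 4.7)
The plan is to reduce the desired inequality $\slowIncreaseTerm(\round, \errorProb, \clippingExponent) \geq \frac{1}{2}$ to a polynomial-in-$\round$ lower bound of the form accepted by Lemma~\ref{lem:polynomial-loglog-inversion}, and then invoke that lemma with $p = 1 - 2\clippingExponent$ (which is positive since $\clippingExponent < \frac{1}{2}$). I would begin by moving the subtracted term to the right and combining fractions on the left, which gives the equivalent form
\begin{equation*}
    \frac{\clippingExponent - \round^{\clippingExponent - 1}}{2(1 - \clippingExponent)} \;\geq\; \round^{\frac{2\clippingExponent - 1}{2}}\sqrt{\ell(\round, \errorProb)}.
\end{equation*}
Both sides are positive once $\round$ is large enough: the left because $\round^{\clippingExponent - 1} \to 0$, the right by inspection.

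The next step is to decouple the two sources of slack. For the left side, I would impose $\round^{\clippingExponent - 1} \leq \clippingExponent / 2$, which holds as soon as $\round \geq (2/\clippingExponent)^{1/(1-\clippingExponent)}$, and hence lower bounds the left side by $\clippingExponent/[4(1-\clippingExponent)]$. It then suffices to enforce
\begin{equation*}
    \round^{\frac{1 - 2\clippingExponent}{2}} \;\geq\; \frac{4(1 - \clippingExponent)}{\clippingExponent}\sqrt{\ell(\round, \errorProb)},
\end{equation*}
and squaring (both sides positive) yields $\round^{1 - 2\clippingExponent} \geq \frac{16(1 - \clippingExponent)^2}{\clippingExponent^2}\,\ell(\round, \errorProb)$. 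Since the confidence-width analysis of Lemmas~\ref{lem:concentration-bernoulli-counts} and \ref{lem:sqrt-second-moment-concentration} gives $\ell(\round, \errorProb) \leq C_1(\errorProb) + C_2 \log\log \round$ for explicit constants $C_1, C_2$, this is exactly of the form $\round^{p} \geq c_1 + c_2 \log\log \round$ to which Lemma~\ref{lem:polynomial-loglog-inversion} applies with $p = 1 - 2\clippingExponent$, producing the closed-form upper bound on the smallest admissible $\round$.

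The main obstacle is matching the stated constants $c_1 = \frac{2}{\clippingExponent^2} + \frac{8(1-\clippingExponent)^2}{\clippingExponent^2}\log(5.2/\errorProb)$ and $c_2 = \frac{8(1-\clippingExponent)^2}{\clippingExponent^2}$. Concretely, I would need to (i) absorb the preliminary threshold $(2/\clippingExponent)^{1/(1-\clippingExponent)}$ into an additive constant compatible with the polynomial form, contributing the $2/\clippingExponent^2$ summand in $c_1$, and (ii) unpack the explicit constants $0.7225$ and $0.72$ from the Howard--Ramdas style confidence width, which together with the factor $16(1-\clippingExponent)^2/\clippingExponent^2$ collapse to the $8(1-\clippingExponent)^2/\clippingExponent^2$ coefficients in $c_1$ and $c_2$. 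The only nonroutine piece is verifying that a common $\round$ satisfies both the leading-term threshold and the polynomial-vs-$\log\log$ threshold simultaneously; beyond that, the argument is algebraic manipulation plus a clean black-box invocation of Lemma~\ref{lem:polynomial-loglog-inversion}.
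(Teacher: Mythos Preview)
Your overall plan is correct and mirrors the paper's: rewrite $\slowIncreaseTerm(\round,\errorProb,\clippingExponent)\geq\tfrac12$ as an inequality of the form $\round^{1-2\clippingExponent}\geq c_1+c_2\log\log\round$ and then invoke Lemma~\ref{lem:polynomial-loglog-inversion} with $p=1-2\clippingExponent$. The one substantive divergence is how you dispose of the $\round^{\clippingExponent-1}$ term. You impose a separate threshold $\round^{\clippingExponent-1}\leq\clippingExponent/2$ and then try to reconcile that side condition with the stated $c_1$; this is exactly the ``main obstacle'' you flag, and neither of your proposed fixes works as written: the preliminary threshold $(2/\clippingExponent)^{1/(1-\clippingExponent)}$ does not naturally become the additive $2/\clippingExponent^{2}$, and multiplying $16$ by the confidence-width constant $0.7225$ does not collapse to $8$.

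The paper sidesteps the side condition with a single-line bound: since $\clippingExponent-1\leq\clippingExponent-\tfrac12$ and $\round\geq1$, one has $\round^{\clippingExponent-1}\leq\round^{(2\clippingExponent-1)/2}$, so
\[
\frac{1-\round^{\clippingExponent-1}}{2(1-\clippingExponent)}-\round^{\frac{2\clippingExponent-1}{2}}\sqrt{\ell(\round,\errorProb)}
\;\geq\;
\frac{1}{2(1-\clippingExponent)}-\round^{\frac{2\clippingExponent-1}{2}}\Bigl(\sqrt{\ell(\round,\errorProb)}+\frac{1}{2(1-\clippingExponent)}\Bigr).
\]
Requiring the right-hand side to be at least $\tfrac12$ and rearranging yields $\round^{(1-2\clippingExponent)/2}\geq\frac{2(1-\clippingExponent)}{\clippingExponent}\sqrt{\ell(\round,\errorProb)}+\frac{1}{\clippingExponent}$; squaring with $(a+b)^{2}\leq2a^{2}+2b^{2}$ then gives exactly $c_1=\frac{2}{\clippingExponent^{2}}+\frac{8(1-\clippingExponent)^{2}}{\clippingExponent^{2}}\log(5.2/\errorProb)$ and $c_2=\frac{8(1-\clippingExponent)^{2}}{\clippingExponent^{2}}$. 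The $2/\clippingExponent^{2}$ comes from the squared $1/\clippingExponent$, not from any separate threshold, and no juggling of the numerical constants in $\ell$ is needed. Once you adopt this step, the remainder of your argument goes through verbatim.
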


            \begin{proof}
                To begin, observe that 
                \begin{equation}
                    \frac{1 - \round^{\clippingExponent - 1}}{2 \left( 1 - \clippingExponent \right)} - \round^{\frac{2\clippingExponent - 1}{2}}\sqrt{\ell(\round, \errorProb)} \geq  \frac{1}{2\left( 1 - \clippingExponent \right)} - \round^{\frac{2\clippingExponent - 1}{2}}\sqrt{\ell(\round, \errorProb)} - \frac{\round^{\frac{2\clippingExponent - 1}{2}}}{2\left( 1 - \clippingExponent \right)},
                \end{equation}
                therefore it is sufficient to bound the quantity
                \begin{equation}
                    \min \left\{ \round : \frac{1}{2 \left( 1 - \clippingExponent \right)} -  \round^{\frac{2\clippingExponent - 1}{2}}\left( \sqrt{\ell(\round, \errorProb)} +  \frac{1}{2\left( 1 - \clippingExponent \right)} \right)   \geq \frac{1}{2} \right\}.
                \end{equation}
                Rearranging, we see that this is equivalent to bounding the quantity
                \begin{equation}
                    \min \left\{ \round : \round^{\frac{1}{2}-\clippingExponent} \geq \frac{2\left( 1 - \clippingExponent \right)}{\clippingExponent} \sqrt{\loglogTerm(\round, \errorProb)} + \frac{1}{\clippingExponent}\right\}.
                \end{equation}
                By squaring both sides and applying the inequality $(a + b)^2 \leq 2a^2 + 2b^2$ we see that it is sufficient to bound
                \begin{equation}
                    \min \left\{ \round : \round^{1 - 2 \clippingExponent} \geq \frac{2}{\clippingExponent^{2}} + \frac{8\left( 1 - \clippingExponent\right)^{2}}{\clippingExponent^{2}} \loglogTerm(\round, \errorProb) \right\}.
                \end{equation}
                Setting $c_1 = \frac{2}{\clippingExponent^{2}} + \frac{8\left( 1 - \clippingExponent\right)^{2}}{\clippingExponent^{2}} \log\left( \frac{5.2}{\delta} \right)$ and $c_2 = \frac{8\left( 1 - \clippingExponent\right)^{2}}{\clippingExponent^{2}}$ we can apply Lemma~\ref{lem:polynomial-loglog-inversion} to see that whenever
                \begin{equation*}
                    \round \geq \left( c_1 + c_2 \log\left(\log c_1 \right) \frac{\log\log c_1 - \log(1 - 2 \clippingExponent)}{\log\log c_1} \cdot \frac{c_1 \log c_1}{c_1 \log c_1 - c_2} \right)^{\frac{1}{1 - 2 \clippingExponent}},
                \end{equation*}
                we have that $\slowIncreaseTerm(\round, \errorProb, \clippingExponent) \geq \frac{1}{2}$, as desired.
            \end{proof}

            \begin{corollary}\label{corr:slow-increase-constant-lower-bound-1/3}
                For $\errorProb \in (0, 1)$, $\round \geq O\left( \log \left( \frac{1}{\delta} \right)^{3} \right)$ implies that $\slowIncreaseTerm\left( \round, \errorProb, \frac{1}{3} \right) \geq \frac{1}{2}.$
            \end{corollary}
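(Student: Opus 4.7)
The corollary is an immediate specialization of Lemma~\ref{lem:slow-increase-constant-lower-bound} at $\clippingExponent = \tfrac{1}{3}$, so the proof will simply substitute this value into the general sufficient condition and then absorb lower-order factors into the $O(\cdot)$. The goal is to show that the bound on $\round$ given by that lemma reduces to $O(\log^{3}(1/\delta))$ when $\clippingExponent = \tfrac13$.

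\textbf{Step 1: Substitute the constants.} With $\clippingExponent = \tfrac13$, the exponent in Lemma~\ref{lem:slow-increase-constant-lower-bound} becomes $\tfrac{1}{1 - 2\clippingExponent} = 3$. Evaluating the two constants yields
\begin{equation*}
c_1 = \frac{2}{(1/3)^{2}} + \frac{8(1 - 1/3)^{2}}{(1/3)^{2}} \log\!\left(\frac{5.2}{\delta}\right) = 18 + 32\log\!\left(\frac{5.2}{\delta}\right), \qquad c_2 = 32.
\end{equation*}
So the sufficient condition from Lemma~\ref{lem:slow-increase-constant-lower-bound} becomes
\begin{equation*}
\round \geq \left( c_1 + c_2 \log(\log c_1)\, \frac{\log\log c_1 - \log(1/3)}{\log\log c_1} \cdot \frac{c_1 \log c_1}{c_1 \log c_1 - c_2} \right)^{3}.
\end{equation*}

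\textbf{Step 2: Bound the correction factor.} I would observe that for $\delta$ small enough that $c_1 \geq e^{e}$ (any modest lower bound on $\log(1/\delta)$ suffices; for smaller $\log(1/\delta)$ the whole bound is a constant and can be absorbed), we have $c_1 \log c_1 \geq 2 c_2$, so $\tfrac{c_1 \log c_1}{c_1 \log c_1 - c_2} \leq 2$. Similarly, $\tfrac{\log\log c_1 - \log(1/3)}{\log\log c_1}$ is bounded by an absolute constant (the $-\log(1/3) = \log 3$ shift being lower order). Hence the entire correction factor is $O(1)$, and we obtain
\begin{equation*}
\round \geq \bigl( c_1 + O(c_2 \log \log c_1) \bigr)^{3}
\end{equation*}
as sufficient.

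\textbf{Step 3: Absorb into $O$-notation.} Finally, since $c_1 = O(\log(1/\delta))$ and $c_2 = O(1)$, we have $c_2 \log\log c_1 = O(\log \log \log (1/\delta))$, which is dominated by $c_1$. Therefore the inner expression is $O(\log(1/\delta))$ and its cube is $O(\log^{3}(1/\delta))$, giving the desired conclusion. The only mild obstacle is the bookkeeping in Step~2 to justify that the $\log \log$ correction factor is genuinely $O(1)$ uniformly in $\delta$ (modulo a possibly larger hidden constant for very small $\log(1/\delta)$), but this is routine since the factor is monotonically bounded.
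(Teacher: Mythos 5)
Your proposal is correct and matches the intended argument: the paper states this corollary without proof precisely because it is the direct specialization of Lemma~\ref{lem:slow-increase-constant-lower-bound} at $\clippingExponent = \tfrac{1}{3}$, giving $c_1 = 18 + 32\log(5.2/\errorProb) = O(\log(1/\errorProb))$, $c_2 = 32$, exponent $3$, with the correction factor uniformly $O(1)$ (indeed, since $c_1 \geq 18 + 32\log 5.2 > e^e$ for all $\errorProb \in (0,1)$, your edge-case caveat is not even needed).
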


            \begin{lemma}\label{lem:sqrt-integral-sum-upper}
                Fix $c_1, c_2, c_3, \clippingPhaseStoppingTime, \round$ such that $c_1, c_2 > 0$, $\clippingPhaseStoppingTime < \round$, and $c_2 \sqrt{\clippingPhaseStoppingTime} + c_3 > 0$.
                Then, we have that
                \begin{equation}
                    \sum_{\timeIndex = \clippingPhaseStoppingTime}^{\round} \frac{c_1}{c_2 \sqrt{\timeIndex} + c_3} \leq \frac{c_1}{c_2 \sqrt{\clippingPhaseStoppingTime} + c_3} + \frac{2 c_1}{c_2}\left( \sqrt{\round} - \sqrt{\clippingPhaseStoppingTime} \right)
                \end{equation}
            \end{lemma}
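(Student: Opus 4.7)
The plan is a standard monotone-summand-to-integral comparison. Let $f(t) \definedAs c_1/(c_2\sqrt{t}+c_3)$. Under the hypotheses $c_1,c_2>0$ and $c_2\sqrt{\clippingPhaseStoppingTime}+c_3>0$, the denominator is strictly positive on $[\clippingPhaseStoppingTime,\round]$ and strictly increasing in $t$, so $f$ is positive and monotonically decreasing there. First I would peel off the leading term,
\begin{equation*}
\sum_{t=\clippingPhaseStoppingTime}^{\round} f(t) = f(\clippingPhaseStoppingTime) + \sum_{t=\clippingPhaseStoppingTime+1}^{\round} f(t) \leq f(\clippingPhaseStoppingTime) + \int_{\clippingPhaseStoppingTime}^{\round} f(x)\,dx,
\end{equation*}
using the standard comparison $\sum_{t=s+1}^{r} g(t)\leq \int_s^r g(x)\,dx$ for nonnegative, decreasing $g$. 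The first term already matches the $c_1/(c_2\sqrt{\clippingPhaseStoppingTime}+c_3)$ piece of the claimed bound exactly, so it suffices to show that $\int_{\clippingPhaseStoppingTime}^{\round} f(x)\,dx \leq \tfrac{2c_1}{c_2}(\sqrt{\round}-\sqrt{\clippingPhaseStoppingTime})$.

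For the integral I would substitute $u=\sqrt{x}$, $dx = 2u\,du$, which yields
\begin{equation*}
\int_{\clippingPhaseStoppingTime}^{\round} \frac{c_1}{c_2\sqrt{x}+c_3}\,dx = \frac{2c_1}{c_2}\int_{\sqrt{\clippingPhaseStoppingTime}}^{\sqrt{\round}}\!\left(1 - \frac{c_3}{c_2 u+c_3}\right)du.
\end{equation*}
When $c_3 \geq 0$ the subtracted integrand is nonnegative on the whole range of integration, so dropping it gives exactly the upper bound $\tfrac{2c_1}{c_2}(\sqrt{\round}-\sqrt{\clippingPhaseStoppingTime})$, closing the argument. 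An equally short alternative is to observe that for $c_3\geq 0$ one has $c_1/(c_2\sqrt{x}+c_3)\leq c_1/(c_2\sqrt{x})$, and then integrate $1/\sqrt{x}$ directly.

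The only delicate point, and the place where care is needed, is the sign of $c_3$: the hypothesis $c_2\sqrt{\clippingPhaseStoppingTime}+c_3>0$ does allow $c_3<0$, in which case the subtracted term above flips sign and produces a logarithmic correction of order $\tfrac{2c_1|c_3|}{c_2^2}\log\!\bigl((c_2\sqrt{\round}+c_3)/(c_2\sqrt{\clippingPhaseStoppingTime}+c_3)\bigr)$ that is not absorbed by the right-hand side. In every invocation of this lemma inside Lemma~\ref{lem:refined-policy-tp1-lower-bound}, however, the constant in the $c_3$-slot is a nonnegative quantity (either $\concentrationConstantOne_1(\policyMin,\clippingPhaseStoppingTime)$ or $\combinedConcentrationConstantOne(\policyMin)$ taken in its ``increasing'' orientation), so the $c_3\geq 0$ regime is the operative one and the two-line monotonicity-plus-substitution argument above suffices.
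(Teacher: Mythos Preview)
Your approach is essentially identical to the paper's: both argue monotonicity of $f(s)=c_1/(c_2\sqrt{s}+c_3)$, peel off the first summand, bound the remaining sum by the integral $\int_{\clippingPhaseStoppingTime}^{\round} f(s)\,ds$, and then discard the logarithmic contribution $-\tfrac{2c_1c_3}{c_2^2}\log(c_3+c_2\sqrt{s})\big|_{\clippingPhaseStoppingTime}^{\round}$ to arrive at $\tfrac{2c_1}{c_2}(\sqrt{\round}-\sqrt{\clippingPhaseStoppingTime})$. The only cosmetic difference is that the paper writes down the antiderivative directly while you use the substitution $u=\sqrt{x}$; the computation is the same.

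Your caveat about the sign of $c_3$ is well taken and in fact more careful than the paper: the paper's own final inequality silently drops the log term, which is only justified when $c_3\geq 0$ (since the log term then has the right sign). One small correction to your closing remark: in the applications inside Lemma~\ref{lem:refined-policy-tp1-lower-bound}, the constant playing the role of $c_3$ is $\combinedConcentrationConstantOne(\policyMin)$ (or $\combinedConcentrationConstantOne(\policyMin,\clippingPhaseStoppingTime)$), which is a \emph{difference} $\concentrationConstantOne_0-\concentrationConstantOne_1$ and is not obviously nonnegative in general; so the ``$c_3\geq 0$ is the operative case'' claim would need a separate check rather than being automatic. This is a wrinkle shared by the paper's proof, not a defect in yours.
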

            \begin{proof}
                Observe that under the stated conditions, we have that $\frac{c_1}{c_2 \sqrt{\timeIndex} + c_3}$ is monotonically decreasing in $\timeIndex$.
                Therefore we can bound
                \begin{align*}
                    \sum_{\timeIndex = \clippingPhaseStoppingTime}^{\round} \frac{c_1}{c_2 \sqrt{\timeIndex} + c_3} 
                        &\leq \frac{c_1}{c_2 \sqrt{\clippingPhaseStoppingTime} + c_3} + \int_{\timeIndex = \clippingPhaseStoppingTime}^{\round} \frac{c_1}{c_2 \sqrt{\timeIndex} + c_3} ds \\
                        &\leq \frac{c_1}{c_2 \sqrt{\clippingPhaseStoppingTime} + c_3} + \left. \left( \frac{2 c_1}{c_2}\sqrt{\timeIndex} -  \frac{2 c_{1} c_{3}}{c_2^2} \log\left( c_3 + c_2 \sqrt{\timeIndex} \right)\right) \right\rvert_{\timeIndex = \clippingPhaseStoppingTime}^{\round} \\
                        &= \frac{c_1}{c_2 \sqrt{\clippingPhaseStoppingTime} + c_3} 
                            + \left( \frac{2 c_1}{c_2}\sqrt{\round} - \frac{2 c_{1} c_{3}}{c_2^2} \log\left( c_3 + c_2 \sqrt{\round} \right) \right) 
                            - \left( \frac{2 c_1}{c_2}\sqrt{\clippingPhaseStoppingTime} - \frac{2 c_{1} c_{3}}{c_2^2} \log\left( c_3 + c_2 \sqrt{\clippingPhaseStoppingTime} \right) \right)  \\
                        &\leq \frac{c_1}{c_2 \sqrt{\clippingPhaseStoppingTime} + c_3} + \frac{2 c_1}{c_2}\left( \sqrt{\round} - \sqrt{\clippingPhaseStoppingTime} \right)
                \end{align*}
            \end{proof}

\section{Discussion on Clipping Sequences}\label{app:clipping-sequences}

Recall that our proposed ClipSMT algorithm utilizes clipping sequence with polynomial decay so that $\clippingSequence_\round = \frac{1}{2} \round^{-\alpha}$ for $\alpha \in (0, 1)$.
It is natural to wonder if there are other valid choices for the  clipping sequence.
While there are, the choices of clipping sequences that will work depend on the assumptions that we make.

On one hand, if we do not assume a lower bound on $\secondMoment{\action}$, then we must require that $\sum_{\round} \clippingSequence_\round$ diverges as $\round \rightarrow \infty$.
To see why, suppose the sum converges, i.e $\lim_{\numRounds \rightarrow \infty} \sum_{\round = 1}^{\numRounds} \clippingSequence_\round = c$. 
Then, if we choose $\secondMoment{1}, \secondMoment{2}$ so that the length of the clipping phase is larger than $c$, this will ensure that $\policy_\round$ never converges to $\neymanPolicy$.
As a concrete example, this implies that in this most general setting, we should not use clipping sequence with exponential decay.
However, if we are willing to assume a lower bound on $\secondMoment{0}, \secondMoment{1}$, then we can use a similar argument in order to select the rate of decay for a clipping sequence whose sum converges.

\end{document}